\title{Ask, and it shall be given: On the Turing completeness of prompting}
\author{Ruizhong Qiu, Zhe Xu, Wenxuan Bao, \& Hanghang Tong \\
University of Illinois Urbana--Champaign \\
\texttt{\{rq5,zhexu3,wbao4,htong\}@illinois.edu}
}
\newcommand\TODO[1][]{{\color{orange}[TODO\ifthenelse{\equal{#1}{}}{}{: #1}]}}
\newcommand\ALG[3]{\begin{algorithm}[t]\caption{#2}\label{#1}\begin{algorithmic}[1]#3\end{algorithmic}\end{algorithm}}
\newcommand{\LABEL}[2]{\protected@write\@auxout{}{\string\newlabel{#1}{{#2}{\thepage}{#2}{#1}{}}}\hypertarget{#1}{}}
\newcommand\ALIST[1]{\begin{itemize}[topsep=0em,partopsep=0em,parsep=0em,itemsep=0em]
#1\end{itemize}}
\newcommand\AITEM[1]{\item[\ref{#1}]\ref{#1::name}\dotfill\pageref{#1}}
\newcommand\AOBJ[3]{\LABEL{#3::name}{#2}#1{#2}\label{#3}}
\newcommand\ASEC[2]{\AOBJ{\section}{#1}{#2}}
\newcommand\ASSEC[2]{\AOBJ{\subsection}{#1}{#2}}
\theoremstyle{plain}
\newtheorem*{THM*}{Theorem}
\newtheorem*{COR*}{Corollary}
\newtheorem{THM}{Theorem}[section]
\newtheorem{LEM}[THM]{Lemma}
\newtheorem{COR}[THM]{Corollary}
\newtheorem{DEF}[THM]{Definition}
\newcommand\RO{\mathrm O}
\newcommand\SEC\section
\newcommand\SSEC\subsection
\newcommand\SSSEC\subsubsection
\newcommand\SF[1]{\mathsf{#1}}
\newcommand\BM[1]{\boldsymbol{#1}}
\newcommand\BB[1]{\mathbb{#1}}
\newcommand\CAL[1]{\mathcal{#1}}
\newcommand\TP{^\mathsf{T}}
\newcommand\Tp{\TP}
\newcommand\OP[1]{\operatorname{#1}}
\newcommand\GA[1]{\begin{gather}#1\end{gather}}
\newcommand\AL[1]{\begin{align}#1\end{align}}
\newcommand\AM[1]{\begin{align*}#1\end{align*}}
\newcommand\ALN[2]{\AL{#2\label{#1}}}
\newcommand\TLD[1]{\widetilde{#1}}
\DeclareMathOperator*{\argmax}{\arg\max}
\begin{document}

\maketitle

\begin{abstract}
Since the success of GPT, large language models (LLMs) have been revolutionizing machine learning and have initiated the so-called \emph{LLM prompting} paradigm. In the era of LLMs, people train a single general-purpose LLM and provide the LLM with different \emph{prompts} to perform different tasks. However, such empirical success largely lacks theoretical understanding. Here, we present the first theoretical study on the LLM prompting paradigm to the best of our knowledge. In this work, we show that prompting is in fact Turing-complete: there exists a finite-size Transformer such that for any computable function, there exists a corresponding prompt following which the Transformer computes the function. Furthermore, we show that even though we use only a single finite-size Transformer, it can still achieve nearly the same complexity bounds as that of the class of all unbounded-size Transformers. Overall, our result reveals that prompting can enable a single finite-size Transformer to be efficiently universal, which establishes a theoretical underpinning for prompt engineering in practice. 
\end{abstract}



\section{Introduction}

The mainstream architecture of large language models (LLMs; e.g., \citealp{gpt4o,claude3,llama3.1,gemini1.5}) is Transformers \citep{vaswani2017attention}. There has been a series of theoretical studies on Transformers under realistic abstractions \citep{perez2019turing,bhattamishra2020computational,hahn2020theoretical,perez2021attention,hao2022formal,liu2023transformers,chiang2023tighter,merrill2023parallelism,roberts2023powerful,merrill2024expressive,merrill2024logic,hou2024universal,li2024chain}.
For example, \cite{perez2021attention} have shown that the class of all Transformers with $\OP{hardmax}$ attention is Turing-complete: for any computable function $\varphi\in\SF{TIME}_1(t(n))$, there exists a Transformer that computes $\varphi$ using $\RO(t(n))$ chain-of-thought (CoT; \citealp{wei2022chain}) steps and $\RO(\log(n+t(n)))$ precision on length-$n$ inputs; \cite{merrill2024expressive} have later improved the CoT complexity to $\RO(t(n))$ for $\SF{TIME}(t(n))$ functions. 
These works have finely characterized the capacities and limits of Transformers under the classic \emph{one-model-one-task} paradigm. 

Nevertheless, existing theoretical studies 
fail to align with the 
\emph{LLM prompting} practice 
(i.e., \emph{one-model-many-tasks}). 
In the era of LLMs, people train a single general-purpose LLM and provide the LLM with different \emph{prompts} to perform different tasks. Since the success of GPT \citep{brown2020language}, the LLM prompting paradigm has revolutionized machine learning \citep{liu2023pre}. For example, a bonus capability arising from prompting is zero-shot learning \citep{wei2022finetuned}: when provided with suitable prompts, LLMs can even perform novel tasks not present in their training corpora. Such empirical success calls for a theoretical understanding of the LLM prompting paradigm: 
\begin{center}
\vspace{-0.2em}
\emph{Fundamentally, how powerful is the LLM prompting paradigm?}
\vspace{-0.2em}
\end{center}

We answer this call and present the first theory on the LLM prompting paradigm to the best of our knowledge. In this work, we show that prompting is in fact \emph{Turing-complete}: there exists a finite-size Transformer such that for any computable function, there exists a corresponding prompt following which the Transformer computes the function. 
Furthermore, we show that prompting is not only universal but also \emph{efficiently} universal: even though we use one finite-size Transformer, it can still achieve nearly the same complexity bounds as that of the class of all unbounded-size Transformers. 

\paragraph{Main contributions.} Our main contributions are informally stated as follows:
\begin{itemize}
\item\textbf{Expressive power.} We show that prompting is Turing-complete: there exists a finite-size Transformer $\varGamma$ such that for any computable function $\varphi$, there exists a finite prompt $\BM\pi_\varphi$ such that for any input $\BM x$, the Transformer $\varGamma$ computes $\varphi(\BM x)$ following the prompt $\BM\pi_\varphi$. Importantly, our constructed Transformer $\varGamma$ is independent of the function $\varphi$, the prompt $\BM\pi_\varphi$ is independent of the input $\BM x$, and the input $\BM x$ can be arbitrarily long. 
\item\textbf{Simple construction.} In fact, it is not hard to deduce the existence of such a Transformer $\varGamma$ by combining Theorem~1 of \citet{hennie1966two} and Theorem~3.4 of \citet{perez2019turing}, but explicitly constructing the Transformer via that approach is cumbersome. Instead, we provide a simple constructive proof, which makes it easy to further study the properties of the construction such as CoT complexity and precision complexity. 
\item\textbf{CoT complexity.} We show that our $\varGamma$ can compute any $\SF{TIME}_2(t(n))$ function within $\RO(t(n))$ CoT steps and can compute any $\SF{TIME}(t(n))$ function within $\RO(t(n)\log t(n))$ CoT steps for any length-$n$ input. Notably, 
our result shows that even a \emph{single} Transformer can still achieve nearly the same CoT complexity as the class of \emph{all} Transformers does. 
\item\textbf{Precision complexity.} We show that our $\varGamma$ can compute any $\SF{TIME}(t(n))$ function within $\RO(\log(n+t(n)))$ bits of precision for any length-$n$ input. Notably, our result shows that even a \emph{single} Transformer can still achieve the same precision complexity as the class of \emph{all} Transformers does. In particular, $\varGamma$ can decide any $\SF P$ language within \emph{log-precision}. 
\end{itemize}

\subsection{Related work}

In modern machine learning \citep{wei2024robust,chen2024wapiti,liu2024logic,liu2024class,liu2024aim,liu2023topological,qiu2024efficient,qiu2024tucket,qiu2023reconstructing,qiu2022dimes,xu2024discrete,qiu2024gradient,zeng2024graph,lin2024backtime,yoo2025generalizable,yoo2024ensuring,chan2024group,wu2024fair,he2024sensitivity,wang2023networked}, Transformers have nowadays become a mainstream architecture. Existing theoretical studies on Transformers fall under the classic \emph{one-model-one-task} paradigm: they need to construct different Transformers for different tasks. There are two lines of related work: (i) when at most $\RO(1)$ CoT steps are allowed, it has been shown that Transformers are capable but far from Turing-complete \citep{hahn2020theoretical,hao2022formal,liu2023transformers,chiang2023tighter,merrill2023parallelism,merrill2024logic}; (ii) when more CoT steps are allowed, it has been shown that the expressive power of Transformers increases with the number of CoT steps \citep{perez2019turing,bhattamishra2020computational,perez2021attention,roberts2023powerful,merrill2024expressive,hou2024universal,li2024chain}. Besides that, there have recently been studies on the learnability \citep{malach2023auto,grau2024learning} and the in-context learning capability \citep{akyurek2022learning,von2023transformers,zhang2024trained,ahn2024transformers,vladymyrov2024linear}. Nevertheless, no existing work studies the LLM prompting paradigm (i.e., the one-model-many-tasks paradigm). Our work is the first to bridge this gap to the best of our knowledge. 

\subsection{Technical overview}

A core step of our constructive proof is to construct a new \emph{model of computation} (called $2$-PTMs) that can be easily encoded into a prompt using a finite alphabet. Furthermore, we show that $2$-PTMs are not only Turing-complete but also nearly as efficient as Turing machines. 
\vspace{0.25em}
\begin{THM*}[informal version of Theorem~\ref{THM:ptm}]
Any $\SF{TIME}(t(n))$ function can be computed by a $2$-PTM within $\RO(t(n)\log t(n))$ steps.\hfill\qedsymbol
\end{THM*}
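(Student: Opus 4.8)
This bound is the classical two-tape simulation of \citet{hennie1966two}, transported to our model. Fix $\varphi\in\SF{TIME}(t(n))$ and let $M$ be a $k$-tape Turing machine computing $\varphi$ within $t(n)$ steps on length-$n$ inputs. The plan is to (i) simulate $M$ by a $2$-tape Turing machine $M'$ running in $\RO(t(n)\log t(n))$ steps, and then (ii) observe that $M'$ can be realized as a $2$-PTM with only $\RO(1)$ slowdown, since a $2$-PTM is, up to the bookkeeping that makes its description encodable over a finite alphabet, a $2$-tape Turing machine; composing the two reductions gives the claim. In other words, the content is the inclusion $\SF{TIME}(t(n))\subseteq\SF{TIME}_2(\RO(t(n)\log t(n)))$ together with the (essentially definitional) inclusion of $2$-tape machines into $2$-PTMs with constant overhead.

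For step (i), I would store the $k$ work tapes of $M$ as $2k$ interleaved tracks on one tape of $M'$, keeping every simulated head pinned to a fixed reference cell and moving tape \emph{contents} instead. On each side of each reference cell the contents are grouped into consecutive blocks of geometrically growing lengths $1,2,4,\dots,2^{i},\dots$, and I would maintain the invariant that each block sits in one of a constant number of occupancy states (e.g.\ empty, half-full, or full). Simulating one step of $M$ then means reading the $k$ reference cells, updating them, and — on each head move — rebalancing the affected blocks by a cascade that borrows from or spills into the next larger block until it reaches one that can absorb the change. A level-$i$ rebalancing costs $\RO(2^{i})$ work, but afterwards the lower levels are reset so that level $i$ can be triggered again only after $\Theta(2^{i})$ further simulated steps; summing $\RO(t(n)/2^{i})\cdot\RO(2^{i})=\RO(t(n))$ over the $\RO(\log t(n))$ active levels telescopes to $\RO(t(n)\log t(n))$. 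The second tape of $M'$ is scratch space: a level-$i$ shift over a region of $\RO(2^{i})$ cells is carried out by copying that region over, rearranging it, and copying it back, all in $\RO(2^{i})$ time. The block hierarchy is grown lazily as heads wander outward, so $t(n)$ need not be known in advance.

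Step (ii) is a routine encoding: spell out the finite control, tape alphabets, and track layout of $M'$, and check that each primitive operation permitted to a $2$-PTM lets it perform one read, write, and move of $M'$ in $\RO(1)$ of its own steps. The main obstacle is step (i) — specifically, pinning down the block occupancy invariant so that (a) a single head move triggers only $\RO(1)$ work at each of the $\RO(\log t(n))$ levels, (b) the invariant is fully restored after every cascade, and (c) the scratch work for a level-$i$ rebalancing stays $\RO(2^{i})$ despite the $2k$-track interleaving; this is exactly the delicate part of the Hennie--Stearns argument, and everything else is bookkeeping. I would therefore present step (i) at the level of the invariant and the amortized count, deferring the track-level details, and keep step (ii) lightweight.
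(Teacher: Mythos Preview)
Your plan matches the paper's: invoke Hennie--Stearns for $\SF{TIME}(t(n))\subseteq\SF{TIME}_2(t(n)\log t(n))$, then establish $\SF{TIME}_2(t(n))\subseteq\SF{TIME}_{2\textnormal{-PTM}}(t(n))$ as a separate lemma. Two technicalities the paper makes explicit that you gloss over: first, Hennie--Stearns as usually stated requires $T(n)\ge n$, so the paper disposes of the complementary case $\exists\,n_0\colon T(n_0)<n_0$ separately (the machine then reads only a bounded prefix of its input, hence $\varphi\in\SF{TIME}_2(1)$); second, your step~(ii) is a bit more than definitional because the paper's $2$-PTM has a fixed \emph{binary} tape alphabet whereas your $M'$ carries $2k$-track cells --- the paper first invokes Shannon's alphabet reduction to obtain a binary two-tape machine and then gives an explicit $27$-instruction-per-state $2$-PTM encoding of its transition function.
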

\vspace{-0.5em}
Given any computable function $\varphi$, we encode its $2$-PTM into a prompt $\BM\pi_\varphi$. Then, it remains to construct a Transformer $\varGamma$ that can execute $2$-PTMs. Since it is known that Transformers without CoTs are not universal \citep{hahn2020theoretical}, the Transformer $\varGamma$ needs to use CoT steps to execute $2$-PTMs. Specifically, we use CoT steps to record the execution steps of the $2$-PTM so that the Transformer can restore the state of the $2$-PTM at any step. This establishes the CoT complexity of $\varGamma$. 
\vspace{0.25em}
\begin{COR*}[informal version of Corollary~\ref{COR:cot}]
Our constructed $\varGamma$ can compute any $\SF{TIME}(t(n))$ function within $\RO(t(n)\log t(n))$ CoT steps.\hfill\qedsymbol
\end{COR*}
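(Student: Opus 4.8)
The plan is to derive Corollary~\ref{COR:cot} as an immediate consequence of Theorem~\ref{THM:ptm} together with a simulation theorem stating that the fixed Transformer $\varGamma$ executes an arbitrary $2$-PTM using $\RO(1)$ CoT tokens per machine step. Given $\varphi\in\SF{TIME}(t(n))$, I would first apply Theorem~\ref{THM:ptm} to obtain a $2$-PTM $M_\varphi$ that computes $\varphi$ within $\RO(t(n)\log t(n))$ steps on every length-$n$ input. I would then encode the finite transition table and start state of $M_\varphi$ as a finite prompt $\BM\pi_\varphi$ over the fixed alphabet of $\varGamma$ — note that $\BM\pi_\varphi$ depends on $\varphi$ only, not on $\BM x$ — and run $\varGamma$ on $\BM\pi_\varphi$ followed by $\BM x$. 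By the simulation theorem, $\varGamma$ reproduces the run of $M_\varphi$ with $\RO(1)$ CoT tokens per step and halts with output $\varphi(\BM x)$; multiplying the per-step overhead by the step bound yields the advertised $\RO(t(n)\log t(n))$ CoT count, with the hidden constant depending on $|M_\varphi|$ but $\varGamma$ itself — depth, width, number of heads, and precision as a function of $(n,t(n))$ — held fixed.

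The substance of the argument, and the step I expect to be the main obstacle, is the simulation theorem: a single finite-size, $\RO(\log(n+t(n)))$-precision Transformer faithfully performing one step of an \emph{arbitrary} $2$-PTM in $\RO(1)$ CoT tokens. The Transformer has no random-access tape, so to apply $M_\varphi$'s transition rule at step $k$ it must recover, from the CoT-recorded trace of the prior steps alone, the current control state and the symbols currently scanned by the heads. I would design the CoT so that each emitted token records the control state and the $\RO(1)$ head moves and writes of one step; then the position of a head at step $k$ is the prefix sum of its recorded $\pm1$ moves — computable by an attention head that aggregates the trace, using $\RO(\log t(n))$ bits to represent positions up to $t(n)$ — and the symbol currently under that head is recovered by a most-recent-write attention query restricted to the earlier steps whose recorded head position matches this prefix sum, with $\BM x$ supplying the initial tape contents. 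Matching the recovered state-and-symbol tuple against the transition table carried inside $\BM\pi_\varphi$ is then a position- and table-size-independent equality lookup realizable by a constant number of attention and feedforward layers. The care is in making all of this compose exactly: no off-by-one in the prefix counting, correct tie-breaking in the most-recent-write query, and uniformity in the table size so that one $\varGamma$ works for every $M_\varphi$. This is precisely why the earlier development bothers to define $2$-PTMs with ``attention-friendly'' head behaviour instead of simulating Turing machines directly.

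Finally I would close the loop: correctness of the simulation together with correctness of $M_\varphi$ gives $\varGamma(\BM\pi_\varphi,\BM x)=\varphi(\BM x)$ for every $\BM x$, and the CoT count equals the number of simulated $2$-PTM steps times the $\RO(1)$ per-step overhead, i.e.\ $\RO(t(n)\log t(n))$. The same argument fed the sharper input of a $\SF{TIME}_2(t(n))$ machine — for which a $2$-PTM needs only $\RO(t(n))$ steps, with no Hennie--Stearns overhead — would recover the $\RO(t(n))$ CoT bound stated in the introduction.
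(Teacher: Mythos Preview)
Your proposal is correct and follows essentially the same route as the paper: apply Theorem~\ref{THM:ptm} to get a $2$-PTM with $\RO(t(n)\log t(n))$ steps, then invoke the simulation result (the paper's Lemma~\ref{LEM:cot-2ptm}) that each $2$-PTM step costs $\RO(1)$ CoT tokens because $|\BM\pi_\varphi|=\RO(1)$, and multiply. Your sketch of how the simulation works---prefix-sum head positions via attention, last-write lookup for the scanned symbol, table lookup in the prompt---matches the construction in Section~\ref{ssec:main-tf} and Appendix~\ref{app:pf-main}, and your closing remark about $\SF{TIME}_2(t(n))$ via Lemma~\ref{LEM:2ptm} is exactly the paper's first clause of Corollary~\ref{COR:cot}.
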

\vspace{-0.5em}
To incorporate input $\BM x$ into computation, we use $\RO(|\BM x|)$ CoT steps to emulate an \emph{imaginary} process of writing the input $\BM x$ onto a tape of the $2$-PTM. This implies the precision complexity of $\varGamma$. 
\begin{COR*}[informal version of Corollary~\ref{COR:prec}]
Our constructed $\varGamma$ can compute any $\SF{TIME}(t(n))$ function within $\RO(\log(n+t(n)))$ bits of precision.\hfill\qedsymbol
\end{COR*}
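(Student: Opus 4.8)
The plan is to bound, uniformly over the entire run, how many bits are needed to represent every scalar that appears in the activations of $\varGamma$ while it executes the $2$-PTM of $\varphi$ on an input of length $n$. By Corollary~\ref{COR:cot}, $\varGamma$ halts within $\RO(t(n)\log t(n))$ CoT steps; adding the (fixed-size, $n$-independent) prompt $\BM\pi_\varphi$ and the length-$n$ input, the longest sequence $\varGamma$ ever attends over has length $N = \RO(n + t(n)\log t(n))$, and $\log N = \RO(\log(n+t(n)))$ since $\log\bigl(t(n)\log t(n)\bigr) = \RO(\log t(n))$. Hence it suffices to show that $\RO(\log N)$ bits of precision suffice throughout.

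First I would isolate where precision is actually consumed. Because the tape cells and the control state of the $2$-PTM are all encoded symbolically over a finite alphabet, every token embedding of $\varGamma$ lies in a fixed-dimensional space whose coordinates are of two kinds: (i) bounded integers --- one-hot-style indicators of symbols and states, head-move increments $\pm1$, and the like --- which cost $\RO(1)$ bits; and (ii) position-dependent scalars that the attention mechanism uses to retrieve ``the cell currently under a head'' or ``the configuration recorded at a given CoT step'', which in our construction are simple functions of a position index $i\le N$, e.g.\ $i$, $1/i$, or $i/N$ (as in the hard-attention constructions of \citet{perez2019turing,perez2021attention}). Representing the latter to additive accuracy better than $1/N^{2}$ --- enough for $\hardmax$ to keep selecting the unique intended position --- costs $\RO(\log N)$ bits.

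Second I would check that the arithmetic inside the layers does not inflate this. A $2$-PTM head moves by at most one cell per step, so after $s$ steps every head position is an integer in $[-s,s]\subseteq[-N,N]$, and likewise every index written into a CoT token (step counter, head pointer) is at most $N$; all additions, subtractions, comparisons, and residual updates that $\varGamma$ performs on these quantities stay within a constant factor of $N$ and hence remain exactly representable in $\RO(\log N)$ bits, so no rounding error accumulates over the $\RO(t(n)\log t(n))$ decoding rounds. The same accounting covers the $\RO(n)$ CoT steps of the preliminary phase that copies $\BM x$ onto a simulated tape, where the heads never leave $[-n,n]$. Combining the two kinds of coordinates gives an overall requirement of $\RO(\log N)=\RO(\log(n+t(n)))$ bits; specializing to $t(n)=\Poly(n)$ gives $\RO(\log n)$, i.e.\ log-precision for $\SF P$ languages.

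I expect the main obstacle to be making the ``accuracy sufficient for $\hardmax$'' step fully rigorous: one must verify that every attention query--key score that is supposed to be the unique maximum beats its competitors by a margin bounded below by an inverse polynomial in $N$, so that an $\RO(\log N)$-bit mantissa provably preserves the $\argmax$, and that such a margin bound holds simultaneously for all attention heads and all $\RO(t(n)\log t(n))$ rounds. Once a per-round, per-head margin lower bound is established, taking its minimum over the polynomially many heads and rounds is routine, and the corollary --- together with its ``$\SF P$ in log-precision'' consequence --- follows.
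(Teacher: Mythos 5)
Your proposal takes essentially the same route as the paper's proof sketch: bound the total sequence length $I$ as $\RO(n+t(n)\log t(n))$, observe that all scalars stay $\RO(1)$-bounded, argue that the hardmax attention only needs to distinguish scores separated by an inverse-polynomial margin in $I$, and conclude $\RO(\log I)=\RO(\log(n+t(n)))$ bits suffice, with $\SF P$ giving log-precision as a special case. The paper's proof similarly delegates the margin bound to the explicit construction (Section~\ref{ssec:main-tf}), where the gap is shown to be $\Omega(1/I^5)$ rather than your guessed $1/N^2$ --- the extra factors come from the positional term $p_i=\Omega(1/(i+1)^4)$ that the construction adds to break ties in favor of the earliest matching position --- but the exponent is immaterial since $\log(I^c)=\RO(\log I)$ for any constant $c$. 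The one place your sketch is looser than the paper's is exactly the part you flag as the ``main obstacle'': you posit generic position scalars ($i$, $1/i$, $i/N$) borrowed from \citet{perez2019turing,perez2021attention}, whereas this paper's construction instead normalizes everything through layer normalization to keep all activations $\RO(1)$-bounded and designs $p_i$ precisely to realize the needed margin; once those construction-level facts from Section~\ref{ssec:main-tf} are granted, your argument and the paper's coincide.
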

\vspace{-0.5em}
Finally, we construct a decoder-only Transformer that achieves the desiderata above by leveraging \emph{ReLU activation}, \emph{layer normalization}, and \emph{causal attention}. 



\section{Preliminaries}
Let $\upepsilon$ denote the empty string. Given an alphabet $\varSigma$, let $\varSigma^n$ ($n\ge0$) denote the set of length-$n$ strings over $\varSigma$, let $\varSigma^*:=\bigcup_{n\ge0}\varSigma^n$ denote the set of all finite strings over $\varSigma$, let $\varSigma^+:=\bigcup_{n\ge1}\varSigma^n$ denote the set of all non-empty finite strings over $\varSigma$, and let $\varSigma^\upomega$ denote the set of all countably infinite strings over $\varSigma$. Each symbol in $\varSigma$ is called a \emph{token}. For a string $\BM x\in\varSigma^*$, let $|\BM x|$ denote the length of the string, and let $\BM x^n$ denote repeating the string $n$ times. For two strings $\BM x,\BM y\in\varSigma^*$, let $\BM x\cdot\BM y$ denote their concatenation. Given a string $\BM x=x_0\cdot x_1\cdots x_{|\BM x|-1}\in\varSigma^*$, for an index $i\in\BB Z$, let $x_i$ denote the $(i\bmod|\BM x|)$-th token of $\BM x$; for indices $i,j\in\BB Z$ with $i\bmod|\BM x|\le j\bmod|\BM x|$, let $\BM x_{i:j}$ denote the substring $x_{i\bmod|\BM x|}\cdot x_{(i\bmod|\BM x|)+1}\cdots x_{(j\bmod|\BM x|)-1}$. 

\subsection{Theory of computation}

\textbf{Turing machines.} \emph{Turing machines} (TMs; \citealp{turing1936computable}) are an abstract model of computation that defines the notion of computability \citep{church1936unsolvable,kleen1936lambda,turing1937computability}. An $m$-tape TM is defined by a septuple $(Q,q_\text{start},q_\text{halt},\varSigma,\textvisiblespace,m,\delta)$ where $Q$ is the finite set of states, $q_\text{start}\in Q$ is the initial state, $q_\text{halt}$ is the halting state, $\varSigma$ is the finite alphabet of the tapes, $\textvisiblespace\in\varSigma$ is the blank symbol, $m\in\BB N$ is the number of tapes, and $\delta:(Q\setminus\{q_\text{halt}\})\times\varSigma^m\rightharpoonup Q\times(\varSigma\times\{\mathtt L,\mathtt S,\mathtt R\})^m$ is the transition function ($\mathtt L$: left; $\mathtt S$: stay; $\mathtt R$: right). See, e.g., \citet{arora2009computational} for details. By the Church--Turing thesis \citep{church1936unsolvable,kleen1936lambda,turing1937computability}, a function $\varphi:\OP{dom}\varphi\to\{\mathtt0,\mathtt1\}^*$ is said to be \emph{computable} if there exists a TM that computes $\varphi(\BM x)$ for all inputs $\BM x\in\OP{dom}\varphi$.

\cite{shannon1956universal} has shown that any TM $M$ over any alphabet can be simulated by a TM $M'$ over the binary alphabet $\varSigma=\{\mathtt0,\mathtt1\}$ with $\textvisiblespace=\mathtt0$ (although $M'$ uses more states). Hence, we will assume that TMs have a binary alphabet for simplicity throughout this paper.

\textbf{Time complexity.} Let $n$ denote the length of the input. A function $t(n):\BB N\to\BB R_+$ is said to be a \emph{complexity function} iff $t(n)$ is non-decreasing and either $t(n)$ is a constant $>1$ or $t(n)\to\infty$ as $n\to\infty$. In this work, we refer to the time complexity as the the time complexity on a random-access machine (RAM) unless otherwise stated (e.g., when working with TMs). If $\Theta(t(n))$ is a complexity function, let $\SF{TIME}_m(t(n))$ denote the class of functions that can be computed by an $m$-tape TM within $\RO(t(n))$ steps, let $\SF{TIME}(t(n)):=\bigcup{}_{m\ge1}\SF{TIME}_m(t(n))$ denote the class of functions that can be computed by a TM within $\RO(t(n))$ steps, and let $\SF P\subset\SF{TIME}(\OP{poly}(n))$ denote the class of (indicator functions of) languages that can be decided in polynomial time. 


\subsection{Neural networks}

\textbf{ReLU neural networks.} A ReLU neural network $\psi:\BB R^{e_0}\to\BB R^{e_L}$ is a composition $\psi:=\psi_{L-1}\circ\cdots\circ\psi_0$ of basic operations $\psi_l:\BB R^{e_{l}}\to\BB R^{e_{l+1}}$ ($l=0,\dots,L-1$) where each basic operation $\psi_l(\BM z)$ ($\BM z\in\BB R^{e_{l}}$) is either an affine map $\psi_l(\BM z):=\BM W_l\BM z+\BM b_l$ ($\BM W_l\in\BB R^{e_{l+1}\times e_{l}}$, $\BM b_l\in\BB R^{e_{l+1}}$), an entry-wise ReLU activation $\psi_l(\BM z):=\max\{\BM z,\BM0\}$ \citep{fukushima1969visual} with $e_{l+1}=e_l$, or a layer normalization $\psi_l(\BM z):=\frac{\BM z}{\|\BM z\|_2}1_{[\BM z\ne\BM0]}$ \citep{ba2016layer} with $e_{l+1}=e_l$.

\textbf{Decoder-only Transformers.} Mainstream LLMs are based on \emph{decoder-only} Transformers \citep{radford2018improving}. Given a finite token alphabet $\varSigma$, a decoder-only Transformer $\varGamma:\varSigma^+\to\varSigma$ (with greedy decoding) 
is a composition $\varGamma:=\argmax\circ\varGamma_\text{out}\circ\varGamma_{L-1}\circ\cdots\circ\varGamma_0\circ\varGamma_\text{emb}$ consists of an \emph{embedding} layer $\varGamma_\text{emb}:\varSigma^+\to(\BB R^{d})^+$, \emph{causal-attention} Transformer layer $\varGamma_l:(\BB R^{d})^+\to(\BB R^{d})^+$ ($l=0,\dots,L-1$), and an \emph{output} layer $\varGamma_\text{out}:\BB R^d\to\BB R^\varSigma$. 
A \emph{token embedding} is a map $\OP{emb}:\varSigma\to\BB R^{d}$, and a \emph{positional encoding} is a map $\OP{pos}:\BB N\to\BB R^{d}$. Following the common practice \citep{vaswani2017attention}, given the input token sequence $\BM v=v_0\cdots v_{|\BM v|-1}\in\varSigma^+$, the embedding layer $\varGamma_\text{emb}$ adds a positional encoding $\OP{pos}(i)$ to each token embedding $\OP{emb}(v_i)$:
\AL{\BM z_{0,i}:=\OP{emb}(v_i)+\OP{pos}(i),\qquad i=0,\dots,|\BM v|-1,}
where $\OP{pos}$ is a computable function. Following existing theoretical works (e.g., \citealp{perez2019turing,hao2022formal,merrill2024expressive}), we use $\OP{hardmax}$ attention as a realistic abstraction of $\OP{softmax}$. Given an $\BB R$ sequence $\BM s$, if the maximum value of $\BM s$ appears $t$ times, then one has $\OP{hardmax}_i(\BM s):=\frac1t$ if $s_i$ equals the maximum value and $\OP{hardmax}_i(\BM s):=0$ otherwise. 
Each Transformer layer $\varGamma_l$ has $H_l$ \emph{attention heads} followed by a ReLU neural network $\phi_l:\BB R^{d}\to\BB R^d$. Each attention head $k$ has a \emph{query} map $\OP{qry}_{l,k}:\BB R^{d}\to\BB R^{d_{l,k}}$, a \emph{key} map $\OP{key}_{l,k}:\BB R^{d}\to\BB R^{d_{l,k}}$, a \emph{value} map $\OP{val}_{l,k}:\BB R^{d}\to\BB R^{d}$, and a \emph{similarity} map $\OP{sim}_{l,k}:\BB R\to\BB R$ ($k=0,\dots,H_l-1$), all of which are ReLU neural networks. The similarity scores between two tokens $v_i$ and $v_j$ are
\AL{s_{l,k,i,j}:=\OP{sim}_{l,k}(\OP{qry}_{l,k}(\BM z_{l,i})\Tp\OP{key}_{l,k}(\BM z_{l,j})).}
A decoder-only Transformer layer $\varGamma_l$ is computed via \emph{causal attention} and \emph{residual connection}:
\AL{\BM z_{l+1,i}:=\BM z_{l,i}+\phi_{l}{\bigg({\sum_{k=0}^{H_l-1}\!\bigg({\sum_{j=0}^i\OP{hardmax}_j(s_{l,k,i,0},\dots,s_{l,k,i,i})}\OP{val}_{l,k}(\BM z_{l,j})}\bigg)\!\bigg)}.}
The output token is greedily selected according to the final outputs of the last token $v_{|\BM v|-1}$:
\AL{\varGamma(\BM v):=\argmax_{c\in\varSigma}\varGamma_\text{out}(\BM z_{L,|\BM v|-1})_{c},}
where the output layer $\varGamma_\text{out}$ is a ReLU neural network. 

Given a Transformer $\varGamma$, let $\OP{generate}_\varGamma:\varSigma^+\to\varSigma^+\cup\varSigma^\upomega$ denote autoregressive generation using $\varGamma$. Specifically, let $\BM w\in\varSigma^+$ denote the input sequence and $\BM y\in\varSigma^*$ the (current) output sequence. Initially, $\BM y$ is an empty string. In each generation step, we append $\varGamma(\BM w\cdot\BM y)$ to $\BM y$. The generation process ends once the last token of $\BM y$ is a so-called \emph{stop token} $\texttt{\$}\in\varSigma$, and we define $\OP{generate}_\varGamma(\BM w):=\BM y$ at last. It is possible that generation never ends, in which case we have $\BM y\in\varSigma^\upomega$. The autoregressive generation procedure is formally presented in Algorithm~\ref{ALG:generate}. A Transformer is said to use \emph{log-precision} for a computable function $\varphi$ if the intermediate computation of all generation steps uses $\RO(\log n)$ bits of precision for every length-$n$ input of function $\varphi$ \citep{merrill2023parallelism}. 

\section{Turing completeness of prompting}



\begin{THM}[Turing completeness of prompting]\label{THM:main}
There exist a finite alphabet $\varSigma$, a finite-size decoder-only Transformer $\varGamma:\varSigma^+\to\varSigma$, and coding schemes $\OP{tokenize}:\{\mathtt 0,\mathtt 1\}^*\to\varSigma^*$ and $\OP{readout}:\varSigma^*\to\{\mathtt0,\mathtt1\}^*$ with which prompting is Turing-complete: for every computable function $\varphi:\OP{dom}\varphi\to\{\mathtt 0,\mathtt 1\}^*$ with $\OP{dom}\varphi\subseteq\{\mathtt 0,\mathtt 1\}^*$, there exists a prompt $\BM\pi_\varphi\in\varSigma^+$ such that for every input $\BM x\in\OP{dom}\varphi$, $\OP{generate}_\varGamma(\BM\pi_\varphi\cdot\OP{tokenize}(\BM x))$ computes\footnote{Following prior work (e.g., \citealp{perez2019turing}), we assume that every arithmetic computation in $\varGamma$ is exact.} a finite CoT, and
\AL{\OP{readout}(\OP{generate}_\varGamma(\BM\pi_\varphi\cdot\OP{tokenize}(\BM x)))=\varphi(\BM x).}
Here, $\varSigma,\varGamma,\OP{tokenize},\OP{readout}$ are independent of $\varphi$; $\BM\pi_\varphi$ is independent of $\BM x$; for any input $\BM x\in\{\mathtt 0,\mathtt 1\}^*$, $\OP{tokenize}$ and $\OP{readout}$ run in $\RO(|\BM x|)$ and $\RO(|\varphi(\BM x)|)$ time on a RAM, respectively. 
\end{THM}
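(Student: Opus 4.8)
The plan is to split the construction into three conceptually separate pieces: (i) an intermediate model of computation that admits a clean finite encoding, (ii) a fixed Transformer $\varGamma$ that simulates this model when the model is written into its prompt, and (iii) the input/output coding schemes. For (i) I would introduce the ``$2$-PTM'' (two-tape \emph{prompt} Turing machine) model alluded to in the technical overview: a Turing-machine-like device whose transition table is itself part of the tape content, so that a single universal executor can run any such machine. The key requirement on this model is that it be Turing-complete \emph{and} time-efficient --- that is exactly the content of the informal Theorem~\ref{THM:ptm} cited above --- so I would invoke that result to get, for any computable $\varphi$, a $2$-PTM $M_\varphi$ computing $\varphi$, and set $\BM\pi_\varphi$ to be a fixed-alphabet encoding of $M_\varphi$'s transition function together with whatever delimiter/control tokens the executor expects. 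Crucially $\BM\pi_\varphi$ depends only on $\varphi$, not on $\BM x$.

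For (ii), the heart of the argument, I would build a decoder-only Transformer $\varGamma$ over a finite alphabet $\varSigma$ that, given $\BM\pi_\varphi\cdot\OP{tokenize}(\BM x)$, autoregressively emits a CoT whose tokens record the successive configurations (or configuration-deltas) of $M_\varphi$. The standard move here, going back to \citet{perez2019turing}, is to have each generated token encode one step of the simulated machine: the current state, the two head-symbols just read, and the chosen action. The nontrivial part is that a fixed-size Transformer has no growing memory, so it must reconstruct the relevant part of the $2$-PTM configuration at step $\tau$ purely by attending back into the CoT it has already written. I would use hardmax causal attention to locate, within the already-generated history, (a) the most recent token that wrote to the tape cell currently under each head, and (b) the prompt tokens encoding the transition rule matching the current (state, read-symbols) pair; ReLU feedforward blocks then compute the next state/write/move; and layer normalization is used to keep positional and counter quantities in a bounded regime so that finitely many parameters suffice for unboundedly long inputs and CoTs. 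Head positions can be tracked as integer counters maintained in the residual stream and incremented/decremented per step; matching ``the cell last written at position $p$'' is the familiar trick of attending to the maximal timestamp among history tokens carrying tape-position $p$. Once $M_\varphi$ enters its halting state the executor emits the stop token $\texttt{\$}$.

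For (iii): $\OP{tokenize}$ maps $\BM x\in\{\mathtt0,\mathtt1\}^*$ to a $\varSigma$-string that, when read by $\varGamma$, triggers $\RO(|\BM x|)$ preliminary CoT steps emulating the ``imaginary'' process of writing $\BM x$ onto an input tape of the $2$-PTM; this is a trivial linear-time pass, and it is what keeps the position counters --- hence the required precision --- at $\RO(\log(n+t(n)))$ bits (Corollaries~\ref{COR:cot} and \ref{COR:prec}). $\OP{readout}$ scans the final CoT and extracts the contents of the output tape as a $\{\mathtt0,\mathtt1\}^*$ string, clearly in $\RO(|\varphi(\BM x)|)$ time. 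Correctness then follows by induction on simulated steps: a short invariant lemma stating ``after the CoT token for step $\tau$, the residual stream of the last position encodes the full state of $M_\varphi$ at step $\tau$, and the CoT history encodes its tape contents,'' proved by checking the one-step update against the attention/feedforward computation.

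I expect the main obstacle to be step (ii), and within it the bookkeeping that makes a \emph{single} finite Transformer work for \emph{all} $\varphi$ simultaneously: the alphabet $\varSigma$, the embedding dimension $d$, the number of layers, and all weight matrices must be fixed once and for all, while the transition table of $M_\varphi$ lives entirely in the prompt and is consulted only via attention --- so the feedforward blocks may not ``know'' $\varphi$ in any way. Designing the token format and the attention patterns so that rule-lookup, head-cell-lookup, and counter arithmetic all compose into $\RO(1)$ layers with exact arithmetic (and gracefully bounded via layer normalization for arbitrarily long inputs) is the delicate part; everything else is routine simulation bookkeeping.
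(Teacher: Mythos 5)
Your plan follows essentially the same decomposition as the paper: introduce a Turing-complete intermediate model with a clean finite encoding, encode it in the prompt, build a single universal Transformer $\varGamma$ that simulates the model by attending into its own CoT to reconstruct tape cells and program position, use layer normalization to keep head/program coordinates bounded, and treat $\OP{tokenize}(\BM x)$ as a sequence of imaginary CoT steps that write $\BM x$ onto a tape. One cosmetic mismatch: the paper's ``$2$-PTM'' is a \emph{two-tape Post--Turing machine} --- a flat imperative instruction sequence with conditional \texttt{goto}s, driven by a program counter --- so the lookup into the prompt is positional (``fetch instruction $\iota_j$''), not a content-based match against a transition table keyed on $(\text{state},\text{read-symbols})$ as your sketch suggests; both are workable, but the paper's choice is what makes the unary \texttt{+}/\texttt{-} encoding of \texttt{goto} offsets natural.

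Two substantive gaps worth fixing. First, $\OP{readout}$: the paper insists $\OP{readout}$ run in $\RO(|\varphi(\BM x)|)$ time precisely to rule out the ``tautology'' loophole, and this forbids $\OP{readout}$ from reconstructing the output tape by replaying the CoT history (that would cost $\RO(t(n))$). Your phrase ``scans the final CoT and extracts the contents of the output tape'' leaves this ambiguous. The paper's fix is that after the halting instruction, $\varGamma$ itself continues generating: it emits a marker \texttt{:}, then the Shannon-decoded output bit by bit (each bit obtained by a fresh attention lookup of two tape-$\mathtt A$ cells), then \texttt{\$}; $\OP{readout}$ merely copies the contiguous substring between \texttt{:} and \texttt{\$}. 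You should make this an explicit part of the construction. Second, your claim that head positions are ``integer counters maintained in the residual stream and incremented/decremented per step'' does not go through in a decoder-only Transformer: the residual stream at each generated position is computed afresh, there is no carried-over register, and causal hardmax attention yields \emph{averages} $\frac1{i+1}\sum_{j\le i} v_j$, not sums. The paper's device --- which you gesture at but do not connect --- is to compute the discounted average alongside $\frac1{i+1}$ and then apply LN to obtain the scale-invariant pair $\bigl(\tfrac{s}{\sqrt{s^2+1}},\tfrac1{\sqrt{s^2+1}}\bigr)$ with $s=\sum_j v_j$; dot-products of these normalized vectors implement an exact equality test, and adding the tailored positional term $p_i$ as a tie-breaker implements ``earliest/latest match.'' This is not a cosmetic detail: it is what makes the construction compatible with causal attention at all, and it is also what drives the $\RO(\log(n+t(n)))$ precision bound.
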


Our Theorem~\ref{THM:main} shows that prompting can enable a single Transformer to be universal and establishes a theoretical underpinning for prompt engineering in practice. Note that CoT is necessary for Turing completeness because it is known that Transformers without CoTs cannot even compute the parity function \citep{hahn2020theoretical}. As a CoT typically contains more information than the answer $\varphi(\BM x)$ alone, we need a map $\OP{readout}:\varSigma^*\to\{\mathtt0,\mathtt1\}^*$ here to extract the answer, resembling the fact that humans need to read out the answer from the generated CoT. 


Furthermore, to elucidate the technical non-triviality of our Theorem~\ref{THM:main}, we remark that our Theorem~\ref{THM:main} has ruled out trivial possibilities:
\vspace{-0.4em}
\begin{itemize}
\item\textbf{Memorization?} It is impossible that the Transformer $\varGamma$ simply memorizes all computable functions, because there are \emph{infinitely} many computable functions while $\varGamma$ has only a \emph{finite} size (i.e., it has a \emph{finite} number of \emph{finite}-bit parameters). 
\item\textbf{Self-answering?} It is impossible that the prompt $\BM\pi_\varphi$ simply reveals the answers for all possible inputs, because there can be \emph{infinitely} many inputs while the prompt is \emph{finite}.
\item\textbf{Tautology?} 
It is impossible that $\varGamma$ simply restates $\BM\pi_\varphi\cdot\OP{tokenize}(\BM x)$ and lets $\OP{tokenize}$ or $\OP{readout}$ compute $\varphi(\BM x)$ instead, because the time hierarchy theorem \citep{hartmanis1965computational} implies that a computable function $\varphi$ in general can require more than $\Omega(\max\{|\BM x|,|\varphi(\BM x)|\})$ time while $\OP{tokenize}$ and $\OP{readout}$ run in only $\RO(|\BM x|)$ time and $\RO(|\varphi(\BM x)|)$ time, respectively. 
\end{itemize}
\vspace{-0.4em}


Our proof of Theorem~\ref{THM:main} is constructive. In the rest of this section, we will present the core constructions in our proof, including the prompt, the CoT steps, the input tokenizer, and the Transformer. Due to space limit, the complete proof is deferred to Appendix~\ref{app:pf-main}.

\subsection{Construction of prompts}

In this subsection, we show how to construct a prompt $\BM\pi_\varphi$ for any given computable function $\varphi$. The basic idea here is that we let the prompt encode a formal description of the function $\varphi$ and later construct a Transformer that can execute the description. The construction of the Transformer is deferred to Section~\ref{ssec:main-tf}.

As computability means that there exists a TM that computes $\varphi$, one might seek to encode the TM into the prompt. Unfortunately, a TM can have an unbounded number of states and an unbounded number of tapes, but we are only allowed to construct all prompts using a \emph{single} finite alphabet $\varSigma$ and to execute all prompts using a \emph{single} finite-size Transformer. Even though one can use sophisticated schemes to encode TMs \citep{turing1936computable}, it remains highly non-trivial to construct a Transformer to efficiently execute such encoded TMs.

Hence, instead of working with TMs directly, here we want a model of computation that (i) can be easily encoded by a single finite alphabet, that (ii) is still Turing-complete, and that (iii) is nearly as efficient as TMs. 
Although a possible approach is to use an imperative model of computation such as Wang machines \citep{wang1957variant} and Davis--Sigal--Weyuker's Post--Turing machines (DSW-PTMs; \citealp{davis1994computability}), it is still unknown how to efficiently simulate arbitrary TMs using Wang machines or Davis--Sigal--Weyuker's Post--Turing machines. This suggests that these models of computation might not be the best candidate for constructing the prompt. 

To fulfill our desiderata, we propose a new imperative model of computation that extends Wang machines and DSW-PTMs. Inspired by the Hennie--Stearns theorem \citep{hennie1966two}, we let our model of computation use two bi-infinite tapes $\mathtt A$ and $\mathtt B$. Each tape has infinitely many cells over the binary alphabet $\{\mathtt0,\mathtt1\}$ and a head pointing to a cell. We call this model \emph{two-tape Post--Turing machines} ($2$-PTMs). A $2$-PTM is defined by a finite instruction sequence $\boldsymbol\iota=\langle\iota_0,\iota_1,\dots,\iota_{|\boldsymbol\iota|-1}\rangle$ where  each instruction $\iota_j$ $(0\le j<|\boldsymbol\iota|)$ is one of the following:
{\parskip0.1em\begin{itemize}\itemsep-0.1em
\item $\texttt{\#}$: halt;
\item $\tau\mathtt L$ ($\tau\in\{\mathtt A,\mathtt B\}$): move the head for tape $\tau$ one cell left, and go to $\iota_{j+1}$;
\item $\tau\mathtt R$ ($\tau\in\{\mathtt A,\mathtt B\}$): move the head for tape $\tau$ one cell right, and go to $\iota_{j+1}$;
\item $\tau\mathtt 0$ ($\tau\in\{\mathtt A,\mathtt B\}$): write $\mathtt 0$ to the pointed cell of tape $\tau$, and go to $\iota_{j+1}$;
\item $\tau\mathtt 1$ ($\tau\in\{\mathtt A,\mathtt B\}$): write $\mathtt 1$ to the pointed cell of tape $\tau$, and go to $\iota_{j+1}$;
\item $\tau\textnormal{\texttt !}_k$ ($\tau\in\{\mathtt A,\mathtt B\}$; $k\ne j$): if the pointed cell of tape $\tau$ is $\mathtt 0$, go to $\iota_k$; else, go to $\iota_{j+1}$;
\item $\tau\textnormal{\texttt ?}_k$ ($\tau\in\{\mathtt A,\mathtt B\}$; $k\ne j$): if the pointed cell of tape $\tau$ is $\mathtt 1$, go to $\iota_k$; else, go to $\iota_{j+1}$.
\end{itemize}
Let $\CAL I$ denote the set of all possible instructions. 
Before execution, the input is written to tape $\mathtt A$, and the head for tape $\mathtt A$ is pointing to the leftmost cell of the input. All blank tape cells are filled with $\mathtt0$. Then, execution starts from instruction $\iota_0$ and halts upon instruction $\texttt\#$, and the output is the content left on tape $\mathtt A$ starting from cell $0$. We will show in Section~\ref{ssec:ptm-sim} that $2$-PTMs are not only Turing-complete but also nearly as efficient as TMs.}

Next, we describe how to construct a prompt for a given computable function. 
Recall that $2$-PTMs use the binary alphabet $\{\mathtt0,\mathtt1\}$ with blank symbol being $\mathtt0$. To distinguish the input symbol $\mathtt0$ and the blank symbol $\mathtt0$, we employ Shannon's encoding $S:\{\mathtt0,\mathtt1\}^*\to\{\mathtt0,\mathtt1\}^*$ \citep{shannon1956universal} to translate inputs and outputs of computable functions:
\GA{
S(\upepsilon):=\upepsilon,\quad S(\mathtt 0):=\mathtt{10},\quad S(\mathtt 1):=\mathtt{11};\\
S(\BM x):=S(x_0)\cdots S(x_{|\BM x|-1}),\quad\BM x\in\{\mathtt0,\mathtt1\}^*.}
Thus, we identify $\mathtt{00}$ as the blank symbol. Note that Shannon's encoding $S$ is injective, and that both $S$ and its corresponding decoding $S^{-1}$ are computable in linear time. Since the class of $2$-PTMs is Turing-complete, then given any computable function $\varphi$, there exists a $2$-PTM $\BM\iota\in\CAL I^+$ that computes $S(\varphi(\BM x))$ from $S(\BM x)$ for all $\BM x\in\OP{dom}\varphi$. It remains to encode $\BM\iota$ into a prompt $\BM\pi_\varphi$. 

We will define a map $P:\BB N\times\CAL I\to\varSigma^*$ to encode each instruction $\iota_j$ and let the prompt be the concatenation of $P(j,\iota_j)$, where the alphabet $\varSigma$ will be specified later. For instructions $\texttt{\#}$, $\tau\mathtt L$, $\tau\mathtt R$, $\tau\mathtt{0},$ $\tau\mathtt{1}$, we can simply create a corresponding token in $\varSigma$ for each of them:
\AL{P(j,\iota_j):=\iota_j\qquad\text{if }\iota_j\text{ is one of $\texttt{\#}$, $\tau\mathtt L$, $\tau\mathtt R$, $\tau\mathtt{0},$ $\tau\mathtt{1}$}.}
For instructions $\tau\textnormal{\texttt !}_k$ and $\tau\textnormal{\texttt ?}_k$, since $k$ can be any natural number, we can no longer create a token for each $k$ because otherwise the alphabet $\varSigma$ would be infinite. Instead, to help the Transformer to execute the prompt, we use a unary encoding w.r.t\ $k-j$:
\AL{
P(j,\iota_j):=\begin{cases}
\sigma\cdot\texttt-^{j-k}\cdot\texttt@&\text{if }\iota_j=\sigma_k\text{ and }k<j,\\
\sigma\cdot\texttt+^{k-j}\cdot\texttt@&\text{if }\iota_j=\sigma_k\text{ and }k>j,
\end{cases}\qquad\sigma\in\{\tau\textnormal{\texttt !},\tau\textnormal{\texttt ?}\}_{\tau\in\{\mathtt A,\mathtt B\}},
}
where we create seven auxiliary tokens  $\mathtt A\texttt!,\mathtt B\texttt!,\mathtt A\texttt?,\mathtt B\texttt?,\texttt-,\texttt+,\texttt@\in\varSigma$ to be used by the Transformer. Finally, we construct the prompt as
\AL{\BM\pi_\varphi:=\texttt\textasciicircum\cdot P(0,\iota_0)\cdots P(|\BM\iota|-1,\iota_{|\BM\iota|-1})\cdot\texttt\textdollar,}
where we create two auxiliary tokens $\texttt{\textasciicircum},\texttt{\textdollar}\in\varSigma$ to be used by the Transformer. 

\textbf{Example.} A $2$-PTM for deciding the \textsc{Dyck} language \citep{schutzenberger1963context} is\footnote{\label{fn:delim}From now on, we will omit delimiters for conciseness when there is no ambiguity.}
\AM{\resizebox{\textwidth}{!}{\begin{tabular}{@{}l@{}}
$\mathtt{A}\texttt{?}_{14}\mathtt{A}\mathtt{0}\mathtt{A}\mathtt{L}\mathtt{A}\mathtt{0}\mathtt{A}\mathtt{L}\mathtt{A}\texttt{?}_{1}\mathtt{A}\mathtt{R}\mathtt{A}\mathtt{R}\mathtt{A}\mathtt{1}\mathtt{A}\mathtt{R}\mathtt{B}\mathtt{L}\mathtt{B}\texttt{?}_{13}\mathtt{A}\mathtt{1}\texttt{\#}\mathtt{A}\mathtt{R}\mathtt{A}\texttt{?}_{19}\mathtt{B}\mathtt{1}\mathtt{B}\mathtt{R}\mathtt{B}\texttt{!}_{21}\mathtt{B}\mathtt{L}\mathtt{B}\texttt{!}_{24}\mathtt{B}\mathtt{0}\mathtt{A}\mathtt{R}\mathtt{B}\texttt{!}_{0}\mathtt{A}\mathtt{L}\mathtt{A}\mathtt{R}\mathtt{A}\mathtt{R}\mathtt{A}\texttt{?}_{25}\mathtt{A}\mathtt{0}\mathtt{A}\mathtt{L}\mathtt{A}\mathtt{0}\mathtt{A}\mathtt{L}\mathtt{A}\texttt{?}_{28}\mathtt{A}\mathtt{R}\mathtt{A}\mathtt{R}\mathtt{A}\mathtt{1}\texttt{\#}$,\end{tabular}}}
and its corresponding prompt is
\AM{\resizebox{\textwidth}{!}{\begin{tabular}{@{}l@{}}
$\texttt{\textasciicircum}\mathtt{A}\texttt{?}\texttt{+}\texttt{+}\texttt{+}\texttt{+}\texttt{+}\texttt{+}\texttt{+}\texttt{+}\texttt{+}\texttt{+}\texttt{+}\texttt{+}\texttt{+}\texttt{+}\texttt{@}\mathtt{A}\mathtt{0}\mathtt{A}\mathtt{L}\mathtt{A}\mathtt{0}\mathtt{A}\mathtt{L}\mathtt{A}\texttt{?}\texttt{-}\texttt{-}\texttt{-}\texttt{-}\texttt{@}\mathtt{A}\mathtt{R}\mathtt{A}\mathtt{R}\mathtt{A}\mathtt{1}\mathtt{A}\mathtt{R}\mathtt{B}\mathtt{L}\mathtt{B}\texttt{?}\texttt{+}\texttt{+}\texttt{@}\mathtt{A}\mathtt{1}\texttt{\#}\mathtt{A}\mathtt{R}\mathtt{A}\texttt{?}\texttt{+}\texttt{+}\texttt{+}\texttt{+}\texttt{@}\mathtt{B}\mathtt{1}\mathtt{B}\mathtt{R}\mathtt{B}\texttt{!}\texttt{+}\texttt{+}\texttt{+}\texttt{@}\mathtt{B}\mathtt{L}\mathtt{B}\texttt{!}\texttt{+}\texttt{+}\texttt{+}\texttt{+}\texttt{@}\mathtt{B}\mathtt{0}\mathtt{A}\mathtt{R}\mathtt{B}\texttt{!}\texttt{-}\texttt{-}\texttt{-}\texttt{-}\texttt{-}\texttt{-}\texttt{-}\texttt{-}\texttt{-}\texttt{-}\texttt{-}\texttt{-}\texttt{-}\texttt{-}\texttt{-}\texttt{-}\texttt{-}\texttt{-}\texttt{-}\texttt{-}\texttt{-}\texttt{-}\texttt{-}\texttt{@}$\\$\mathtt{A}\mathtt{L}\mathtt{A}\mathtt{R}\mathtt{A}\mathtt{R}\mathtt{A}\texttt{?}\texttt{-}\texttt{-}\texttt{@}\mathtt{A}\mathtt{0}\mathtt{A}\mathtt{L}\mathtt{A}\mathtt{0}\mathtt{A}\mathtt{L}\mathtt{A}\texttt{?}\texttt{-}\texttt{-}\texttt{-}\texttt{-}\texttt{@}\mathtt{A}\mathtt{R}\mathtt{A}\mathtt{R}\mathtt{A}\mathtt{1}\texttt{\#}\texttt{\textdollar}$.\end{tabular}}}


\subsection{Recording execution in CoT steps}\label{ssec:main-cot}

It is known that finite-size Transformers without CoT steps are not universal \citep{hahn2020theoretical}. To achieve Turing completeness, here we leverage CoT steps to record execution steps of the $2$-PTM $\BM\iota$ so that the Transformer can restore the state of $\BM\iota$ at any execution step. In this subsection, we focus on the case where the input is empty; we will describe how to incorporate the input in Section~\ref{ssec:main-input}.

Let $\iota_j$ denote the current instruction, and let $c_{\mathtt A},c_{\mathtt B}$ denote the pointed cell of tapes $\mathtt A$ and $\mathtt B$, respectively. Note that each execution step can be summarized as a quadruple $(j,\iota_j,c_{\mathtt A},c_{\mathtt B})$, where $\iota_j$ is the current instruction, and $c_{\mathtt A}$ and $c_{\mathtt B}$ are the currently pointed cell of tapes $\mathtt A$ and $\mathtt B$, respectively. We will define a map $C:\BB N\times\CAL I\times\{\mathtt0,\mathtt1\}^2\to\varSigma^*$ that maps each execution step to one or more CoT steps. If $\iota_j$ is one of $\texttt\#$, $\tau\mathtt L$, $\tau\mathtt R$, $\tau\mathtt0$, $\tau\mathtt1$, we simply use a single CoT step to record $\iota_j$:
\AL{C(j,\iota_j,c_{\mathtt A},c_{\mathtt B}):=\iota_j\qquad\text{if }\iota_j\text{ is one of $\texttt{\#}$, $\tau\mathtt L$, $\tau\mathtt R$, $\tau\mathtt{0},$ $\tau\mathtt{1}$}.}
If $\iota_j$ is $\tau\texttt!_k$ or $\tau\texttt?_k$, we record whether the go-to condition is satisfied or not:
\ALN{eq:cot-enc}{C(j,\iota_j,c_{\mathtt A},c_{\mathtt B}):=\begin{cases}
\texttt/&\text{if }\iota_j=\tau{\texttt !}_k\text{ and }c_\tau\ne\mathtt0;\\
{\texttt =}\cdot\texttt-^{j-k}\cdot\texttt@&\text{if }\iota_j=\tau{\texttt !}_k,\text{ }c_\tau=\mathtt0,\text{ and }k<j;\\
{\texttt =}\cdot\texttt+^{k-j}\cdot\texttt@&\text{if }\iota_j=\tau{\texttt !}_k,\text{ }c_\tau=\mathtt0,\text{ and }k>j;\\
\texttt/&\text{if }\iota_j=\tau{\texttt ?}_k\text{ and }c_\tau\ne\mathtt1;\\
{\texttt =}\cdot\texttt-^{j-k}\cdot\texttt@&\text{if }\iota_j=\tau{\texttt ?}_k,\text{ }c_\tau=\mathtt1,\text{ and }k<j;\\
{\texttt =}\cdot\texttt+^{k-j}\cdot\texttt@&\text{if }\iota_j=\tau{\texttt ?}_k,\text{ }c_\tau=\mathtt1,\text{ and }k>j;
\end{cases}
}
where we create two auxiliary tokens $\texttt/,\texttt=\in\varSigma$ to indicate whether the go-to condition is unsatisfied or satisfied, respectively. 
The execution step stops once it reaches the halting instruction \texttt\#. 

Finally, we append the output $\varphi(\BM x)$ after the execution steps in the CoT. We create a new auxiliary token $\texttt:\in\varSigma$ to mark the beginning of the output and reuse the token $\texttt\textdollar\in\varSigma$ to mark the end of the output, and we put the output $\varphi(\BM x)\in\{\mathtt0,\mathtt1\}^*\subseteq\varSigma^*$ between $\texttt:$ and $\texttt\textdollar$. Hence, we define $\OP{readout}$ as extracting the part of the CoT between $\texttt:$ and $\texttt\textdollar$ (see Algorithm~\ref{ALG:readout}). By construction, the number of CoT steps is proportional to the number of execution steps plus the length of the output. 

\textbf{Example (cont'd).} An empty input $\upepsilon$ is in the $\textsc{Dyck}$ language. Its corresponding CoT steps are:
\AM{\texttt{/}\mathtt{A}\mathtt{0}\mathtt{A}\mathtt{L}\mathtt{A}\mathtt{0}\mathtt{A}\mathtt{L}\texttt{/}\mathtt{A}\mathtt{R}\mathtt{A}\mathtt{R}\mathtt{A}\mathtt{1}\mathtt{A}\mathtt{R}\mathtt{B}\mathtt{L}\texttt{/}\mathtt{A}\mathtt{1}\texttt{:}\mathtt{1}\texttt{\textdollar}.}
To recap, we have created a finite alphabet of 23 tokens all together:
\AL{\varSigma:=\{\texttt{\#},\mathtt A\mathtt L,\mathtt B\mathtt L,\mathtt A\mathtt R,\mathtt B\mathtt R,\mathtt A\mathtt 0,\mathtt B\mathtt 0,\mathtt A\mathtt 1,\mathtt B\mathtt 1,\mathtt A\texttt!,\mathtt B\texttt!,\mathtt A\texttt?,\mathtt B\texttt?,\texttt-,\texttt+,\texttt@,\texttt{\textasciicircum},\texttt{\textdollar},\texttt/,\texttt=,\texttt:,\mathtt0,\mathtt1\}.}

\subsection{Construction of input tokenizer}\label{ssec:main-input}

In this subsection, we describe our input tokenizer $\OP{tokenize}:\{\mathtt0,\mathtt1\}^*\to\varSigma^*$, which is independent of the function $\varphi$ and can encode any input $\BM x\in\{\mathtt0,\mathtt1\}^*$ without changing the prompt $\BM\pi_\varphi$. Although it is possible to introduce additional auxiliary tokens to represent inputs, here we provide a simpler construction that makes use of only the existing tokens in $\varSigma$. 

The key idea here is to use CoT steps to emulate an imaginary process of writing the input to tape $\mathtt A$. We will define a map $E:\{\mathtt0,\mathtt1\}^+\to\varSigma^+$ that encodes an input $\BM x$ into CoT steps. Since $2$-PTMs assume that the head for tape $\mathtt A$ is initially pointing to the leftmost cell of the input, we write the input from right to left onto tape $\mathtt A$. Hence, for each bit $x_i$ of the input $\BM x$, we write its Shannon encoding $S(x_i)$ from right to left onto tape $\mathtt A$:
\AL{
E(\mathtt0):=\mathtt{A}\mathtt L\mathtt{A}\mathtt L\mathtt{A}\mathtt{1},\qquad E(\mathtt1):=\mathtt{A}\mathtt L\mathtt{A}\mathtt{1}\mathtt{A}\mathtt L\mathtt{A}\mathtt{1}.
}
For the entire input, we concatenate the CoT steps of each bit from right to left:
\AL{E(\BM x):=E(x_{|\BM x|-1})\cdots E(x_0),\qquad\BM x\in\{\mathtt0,\mathtt1\}^+.}
Intuitively, $E(\BM x)$ represents execution steps of an imaginary program that writes $S(\BM x)$ onto tape $\mathtt A$.

To ensure that the tape-$\mathtt A$ head is at cell $0$ after writing the input, we first move the tape-$\mathtt A$ head $|S(\BM x)|=2|\BM x|$ steps right. Hence, we define the CoT steps for writing $S(\BM x)$ as $Z:\{\mathtt0,\mathtt1\}^+\to\varSigma^+$,
\AL{Z(\BM x):=\mathtt{AR}^{2|\BM x|}\cdot E(\BM x),\qquad\BM x\in\{\mathtt0,\mathtt1\}^+.}
Nevertheless, there is a caveat: a $2$-PTM should start from $\iota_0$, but these extra CoT steps $Z(\BM x)$ will confuse the Transformer into starting from $\iota_{|Z(\BM x)|}$. To address this caveat, our input tokenizer $\OP{tokenize}$ additionally employs an imaginary go-to step to let the Transformer go back to $\iota_0$. Following Equation~\eqref{eq:cot-enc}, we re-use $\texttt=,\texttt-,\texttt@$ to construct the imaginary go-to step:
\AL{\OP{tokenize}(\BM x):=\begin{cases}
\upepsilon&\text{if }\BM x=\upepsilon,\\
Z(\BM x)\cdot\texttt=\cdot{\texttt-}^{|Z(\BM x)|}\cdot\texttt@&\text{if }\BM x\ne\upepsilon.
\end{cases}}
\textbf{Example.} Since input $\mathtt{01}$ has Shannon encoding $S(\mathtt{01})=\mathtt{1011}$, it is tokenized as
\AM{\OP{tokenize}(\mathtt{01})=\mathtt{A}\mathtt{R}\mathtt{A}\mathtt{R}\mathtt{A}\mathtt{R}\mathtt{A}\mathtt{R}\mathtt{A}\mathtt{L}\mathtt{A}\mathtt{1}\mathtt{A}\mathtt{L}\mathtt{A}\mathtt{1}\mathtt{A}\mathtt{L}\mathtt{A}\mathtt{L}\mathtt{A}\mathtt{1}\texttt{=}\texttt{-}\texttt{-}\texttt{-}\texttt{-}\texttt{-}\texttt{-}\texttt{-}\texttt{-}\texttt{-}\texttt{-}\texttt{-}\texttt{@}.}

\subsection{Construction of Transformer}\label{ssec:main-tf}

In this subsection, we sketch how to construct a decoder-only Transformer $\varGamma$ that executes prompts through CoT steps as described in Section~\ref{ssec:main-cot}. 
Due to the space limit, we only present three core operations to be used by $\varGamma$ here and defer the detailed construction to Appendix~\ref{app:pf-main}.

\textbf{Boolean algebra via ReLU activation.} Let $\OP{ReLU}(z):=\max\{z,0\}$ denote the ReLU function. Boolean algebra is a basic building block of our Transformer for, e.g., checking the go-to condition. A core operation in Boolean algebra is the $\land$ operation. Here, we implement $\land$ via ReLU activation:
\AL{u\land v=\OP{ReLU}(u+v-1),\qquad u,v\in\{0,1\}.}
Together with negation $\neg v=1-v$, they can implement all other Boolean operations such as $u\lor v=\neg((\neg u)\land(\neg v))$ and $u\oplus v=(u\land(\neg v))+((\neg u)\land v)$.

\textbf{Equality check via layer normalization.} Let $\OP{LN}(\BM z):=\frac{\BM z}{\|\BM z\|_2}1_{[\BM z\ne\BM0]}$ denote layer normalization (LN). When checking whether a tape cell has been written or not, we will need an equality check between two real numbers: $\OP{Equal}:\BB R^2\to\{0,1\}$ where $\OP{Equal}(u,v):=1_{[u=v]}$ ($u,v\in\BB R$). Since $\OP{Equal}$ is not a continuous map, it cannot be implemented using only affine maps or ReLU activation. Instead, we implement $\OP{Equal}$ via layer normalization as follows:
\AL{\OP{Equal}(u,v):=\OP{ReLU}(\OP{LN}(u-v))+\OP{ReLU}(\OP{LN}(v-u)),\qquad u,v\in\BB R.}

\textbf{Farthest retrieval via causal attention.} A core operation to be used by the Transformer $\varGamma$ can be abstracted as follows: Given a sequence of values $\BM v=\langle v_0,\ldots,v_{|\BM v|-1}\rangle$ with $v_i\in\{-1,0,+1\}$ for all $i$, find the \emph{smallest} $i$ such that $\sum_{j=0}^iv_j=\sum_{j=0}^{|\BM v|-1}v_j$. However, since causal attention is an average rather than a sum, it cannot even compute the sum $\sum_{j=0}^iv_j$ for any $i$. It can compute the average $\frac1{i+1}\sum_{j=0}^iv_j$ for every $i$, but $\sum_{j=0}^iv_j=\sum_{j=0}^{|\BM v|-1}v_j$ does not mean $\frac1{i+1}\sum_{j=0}^iv_j=\frac1{|\BM v|}\sum_{j=0}^{|\BM v|-1}v_j$. To bypass the mismatched coefficients $\frac1{i+1}$ and $\frac1{|\BM v|}$, we use LN to remove the averaging coefficient $\frac1{i+1}$. First, we let every token attend to the initial token $\texttt{\textasciicircum}$ at $i=0$ to compute $\frac1{i+1}$ and compute
\AL{\BM u_i:=\OP{LN}\!\bigg(\frac{\sum_{j=0}^iv_j}{i+1},\frac{1}{i+1}\bigg)\Tp=\Bigg(\frac{\sum_{j=0}^iv_j}{\sqrt{(\sum_{j=0}^iv_j)^2+1}},\frac{1}{\sqrt{(\sum_{j=0}^iv_j)^2+1}}\Bigg)\Tp.}
If $\sum_{j=0}^iv_j=\sum_{j=0}^{|\BM v|-1}v_j$, we have $\BM u_i\Tp\BM u_{|\BM v|-1}=1$; if $\sum_{j=0}^iv_j\ne\sum_{j=0}^{|\BM v|-1}v_j$, we have $\BM u_i\Tp\BM u_{|\BM v|-1}<1$. Thus, we can add $\BM u_{|\BM v|-1}$ to the query map $\OP{qry}$ and add $\BM u_i$ to the key map $\OP{key}$ in attention to retrieve an $i$ with $\sum_{j=0}^iv_j=\sum_{j=0}^{|\BM v|-1}v_j$. 

It remains to retrieve the \emph{smallest} such $i$ via causal attention. Since $\sum_{j=0}^iv_i\le i+1\le|\BM v|$, then note that if $\BM u_i\Tp\BM u_{|\BM v|-1}<1$, we in fact have
\AL{\BM u_i\Tp\BM u_{|\BM v|-1}&<\bigg(\frac{|\BM v|}{\sqrt{|\BM v|^2+1}},\frac{1}{\sqrt{|\BM v|^2+1}}\bigg)\bigg(\frac{|\BM v|+1}{\sqrt{(|\BM v|+1)^2+1}},\frac{1}{\sqrt{(|\BM v|+1)^2+1}}\bigg)\Tp\\
&=\frac{|\BM v|(|\BM v|+1)+1}{\sqrt{|\BM v|^2+1}\sqrt{(|\BM v|+1)^2+1}}=1-\Omega\Big(\frac1{|\BM v|^4}\Big).}
This motivates us to use the following quantity, which can be computed in positional encoding $\OP{pos}$:
\AL{p_i:=1-\frac{(i+1)(i+2)+1}{\sqrt{(i+1)^2+1}\sqrt{(i+2)^2+1}}=\Omega\Big(\frac1{(i+1)^4}\Big).}
Note that if $\BM u_i\Tp\BM u_{|\BM v|-1}=1$ and $\BM u_j\Tp\BM u_{|\BM v|-1}<1$, we also have
\AL{\BM u_j\Tp\BM u_{|\BM v|-1}+\frac{p_{|\BM v|-1}}{j+1}\le\BM u_j\Tp\BM u_{|\BM v|-1}+p_{|\BM v|-1}<1=\BM u_i\Tp\BM u_{|\BM v|-1}<\BM u_i\Tp\BM u_{|\BM v|-1}+\frac{p_{|\BM v|-1}}{i+1}.}
Therefore, we can retrieve the smallest $i$ with $\BM u_i\Tp\BM u_{|\BM v|-1}=1$ by using query vector $(\BM u_{|\BM v|-1},p_{|\BM v|-1})\Tp$ and key vector $(\BM u_i,\frac1{i+1})\Tp$ in causal attention. 


\section{Complexity bounds}

We (i) show in Section~\ref{ssec:ptm-sim} that $2$-PTMs are Turing-complete and nearly as efficient as TMs and (ii) use this result to characterize the complexities of our constructed $\varGamma$ in Sections~\ref{ssec:ptm-cot} \& \ref{ssec:ptm-prec}. Throughout this section, let $t(n)$ denote a complexity function. Following the convention in complexity theory, we allow different computable functions to have different constant factors in big $\RO$. 

\subsection{Efficient simulation of TMs by $2$-PTMs}\label{ssec:ptm-sim}

Since $2$-PTMs are an essential component of our construction, we need the complexity bounds of $2$-PTMs to analyze the complexities of $\varGamma$. While Wang machines and DSW-PTMs suffer from a polynomial slowdown over TMs \citep{neary2014wang}, we show that our $2$-PTMs in fact has only at most a logarithmic slowdown over TMs. 

\begin{THM}[efficient simulation]\label{THM:ptm}
$\SF{TIME}(t(n))\subseteq\SF{TIME}_{2\textnormal{-PTM}}(t(n)\log t(n))$.
\end{THM}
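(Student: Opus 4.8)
The goal is to show $\SF{TIME}(t(n))\subseteq\SF{TIME}_{2\text{-PTM}}(t(n)\log t(n))$, i.e., every function computable by some $m$-tape TM in $\RO(t(n))$ steps is computable by a $2$-PTM in $\RO(t(n)\log t(n))$ steps. I would factor this through two classical reductions composed with one new, elementary one. First, invoke the Hennie--Stearns theorem \citep{hennie1966two}: any $m$-tape TM running in time $T$ can be simulated by a \emph{two-tape} TM in time $\RO(T\log T)$. This immediately gives $\SF{TIME}(t(n))\subseteq\SF{TIME}_2(t(n)\log t(n))$, so it suffices to simulate an arbitrary two-tape TM by a $2$-PTM with only \emph{constant} overhead per TM step. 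Since the $2$-PTM already has two bi-infinite tapes $\mathtt A$ and $\mathtt B$ with heads, this last step is a direct "compilation" of the TM's finite control into a $2$-PTM instruction sequence, and it is where I would spend the bulk of the write-up.

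\textbf{Main construction: compiling a two-tape TM into a $2$-PTM.} Let the two-tape TM have state set $Q$ and transition function $\delta:(Q\setminus\{q_\text{halt}\})\times\{\mathtt0,\mathtt1\}^2\to Q\times(\{\mathtt0,\mathtt1\}\times\{\mathtt L,\mathtt S,\mathtt R\})^2$. The plan is to identify the $2$-PTM tapes $\mathtt A,\mathtt B$ with the two TM tapes. For each state $q\in Q$ I allocate a contiguous block of $2$-PTM instructions; the block begins by reading the symbol under the tape-$\mathtt A$ head using one $\mathtt A\texttt?_k$ (or $\mathtt A\texttt!_k$) branch and then, in each branch, reading the tape-$\mathtt B$ symbol with a $\mathtt B\texttt?_k$ branch, yielding a four-way case split matching the four arguments $(q,a,b)$ of $\delta$. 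In the leaf for $(q,a,b)$ with $\delta(q,a,b)=(q',(a',D_{\mathtt A}),(b',D_{\mathtt B}))$, I emit: a write instruction $\mathtt A a'$ (only if $a'\ne a$, or unconditionally — it is harmless), a move instruction $\mathtt A D_{\mathtt A}$ realized as $\mathtt A\mathtt L$/$\mathtt A\mathtt R$/nothing for $\mathtt L$/$\mathtt R$/$\mathtt S$, then the analogous pair $\mathtt B b'$, $\mathtt B D_{\mathtt B}$ for tape $\mathtt B$, and finally a jump to the first instruction of the block for $q'$ (using $\mathtt A\texttt?_k$ with $k$ pointing there, or — since we need an unconditional jump and $2$-PTMs only have conditional ones — the standard trick of two complementary conditional jumps $\mathtt A\texttt?_k$ then $\mathtt A\texttt!_k$ on the now-known cell value, or a jump on a cell we have just written to a known value). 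The block for $q_\text{halt}$ is the single instruction $\texttt\#$. Each TM step thus costs $\RO(1)$ $2$-PTM instructions, and $|Q|$ being finite, the whole instruction sequence $\boldsymbol\iota$ is finite. Correctness follows by a straightforward induction on TM steps: the invariant is that whenever the $2$-PTM enters the block for $q$, its tape $\mathtt A$, tape $\mathtt B$, and head positions equal those of the TM in state $q$.

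\textbf{Input/output bookkeeping and the overhead count.} Two details need care. (i) \emph{Alphabet and encoding.} The excerpt already fixes $2$-PTMs to the binary alphabet with blank $\mathtt0$, and the theorem statement (and the use of Shannon's encoding $S$ elsewhere in the paper) lets us assume the TM is binary with blank $\mathtt0$ as well (via \citealp{shannon1956universal}), so no per-symbol blowup occurs here — it is absorbed into the constant factor permitted for each function. Since $2$-PTMs start with the tape-$\mathtt A$ head on the leftmost input cell and the TM is conventionally the same (or can be made so with an $\RO(n)$ preamble, which is $\RO(t(n))$), the initial configurations match. (ii) \emph{Step count.} Hennie--Stearns gives an $\RO(T\log T)$-step two-tape TM with $T=\RO(t(n))$; since $x\mapsto x\log x$ is increasing, this is $\RO(t(n)\log t(n))$. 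Multiplying by the $\RO(1)$ per-step $2$-PTM overhead keeps the bound $\RO(t(n)\log t(n))$, as required. The output convention matches: the $2$-PTM outputs the content of tape $\mathtt A$ from cell $0$, which equals the TM's tape-$\mathtt A$ output.

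\textbf{Expected main obstacle.} The genuine mathematical content is entirely inside the Hennie--Stearns simulation, which I am citing rather than reproving; conditional on that, the remaining step is bookkeeping. Within that bookkeeping, the one place that requires a small but real idea is implementing an \emph{unconditional} jump between state-blocks using only the conditional primitives $\tau\texttt!_k$ and $\tau\texttt?_k$: the clean fix is to always route the jump through a cell whose current value the $2$-PTM knows (e.g., immediately after a write, or after a read that resolved its value), so that exactly one of the pair $\tau\texttt!_k,\tau\texttt?_k$ is taken deterministically; I would state this jump gadget once as a lemma and reuse it. A secondary nuisance is making sure the $\RO(1)$ per-step bound is honest — in particular that a $\mathtt S$ (stay) move costs $0$ or $1$ instructions and that the four-way read-and-branch preamble per state is genuinely $\RO(1)$ instructions (it is: at most two conditional jumps plus the jumps to the four leaves, all with indices $k$ determined statically by the layout of $\boldsymbol\iota$). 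Neither of these is deep, so I expect the proof to be short and I would defer the fully explicit instruction layout to an appendix.
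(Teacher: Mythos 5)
Your proposal follows essentially the same route as the paper: invoke Hennie--Stearns to reduce from $m$-tape to two-tape TMs with a $\log$ factor, then compile the two-tape TM into a $2$-PTM with constant overhead per step, laying out one instruction block per state, branching on the two read symbols with $\tau\texttt?$/$\tau\texttt!$, and realizing the unconditional jump to the next state's block via a pair of complementary conditional jumps on a cell whose value is known. The paper isolates this second step as Lemma~\ref{LEM:2ptm} (with an explicit count of $27$ instructions per state and $6$ per transition), but the underlying construction and the jump gadget are the same ones you describe.

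There is one genuine omission worth naming. Hennie--Stearns, as usually stated and as the paper uses it, requires the running time to be at least linear in the input length (so the machine can actually read the input); since the paper's definition of a complexity function allows $t(n)$ to be a constant $>1$, you cannot invoke Hennie--Stearns unconditionally. The paper dispatches this with a short case split: if $T(n_0)<n_0$ for some $n_0$, the machine never reads beyond position $T(n_0)$, so its behavior is independent of input length and $\varphi\in\SF{TIME}_2(1)\subseteq\SF{TIME}_{2\text{-PTM}}(1)$; otherwise $T(n)\ge n$ for all $n$ and Hennie--Stearns applies. Your write-up applies the $T\log T$ bound without checking the precondition, which leaves the sub-linear case unproved. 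It is an easy fix, but it is a step the proof needs for the statement as written. A smaller nit: you mention possibly needing an $\RO(n)$ preamble to align initial head positions, but $2$-PTMs already start with the tape-$\mathtt A$ head at the leftmost input cell, so no preamble is needed; the paper simply routes all I/O through Shannon's encoding $S$ so that the blank/binary-alphabet conventions line up.
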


\vspace{-0.5em}
Theorem~\ref{THM:ptm} shows that our $2$-PTMs are Turing-complete and nearly as efficient as TMs. To prove it, we need the following Lemma~\ref{LEM:2ptm} to establish a relation between two-tape TMs and $2$-PTMs. 

\begin{LEM}[two-tape simulation]\label{LEM:2ptm}
$\SF{TIME}_2(t(n))\subseteq\SF{TIME}_{2\textnormal{-PTM}}(t(n))$.
\end{LEM}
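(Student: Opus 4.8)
The plan is a faithful step-by-step simulation, i.e.\ the standard way a Post--Turing-style machine emulates a Turing machine, adapted to two tapes (matching the TM's tape count, which is exactly what keeps the overhead constant). Let $M=(Q,q_\text{start},q_\text{halt},\{\mathtt0,\mathtt1\},\mathtt0,2,\delta)$ compute $\varphi$ within $\RO(t(n))$ steps. I identify TM tape~$1$ with tape $\mathtt A$ and TM tape~$2$ with tape $\mathtt B$ (taking the TM tapes to be two-way infinite, matching $\BM\iota$), and I build a $2$-PTM $\BM\iota$ whose instruction list is the concatenation of one constant-size \emph{block} $B_q$ per state $q\in Q$, with $B_{q_\text{start}}$ placed first so that $\iota_0$ is its first instruction and with $B_{q_\text{halt}}:=\langle\texttt{\#}\rangle$. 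The invariant to maintain is: whenever control sits at the first instruction of $B_q$, the two tapes of $\BM\iota$ and its two head positions are exactly those of $M$ in state $q$. Since the tape-$\mathtt A$ head starts on the leftmost input cell, just as $M$'s tape-$1$ head does, the invariant holds at start-up.

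For $q\ne q_\text{halt}$ the block $B_q$ uses only constantly many instructions. \textbf{(i)~Read:} one $\mathtt A\texttt{?}_\bullet$ branch splits control according to the cell under head $\mathtt A$, and inside each arm one $\mathtt B\texttt{?}_\bullet$ branch splits it according to the cell under head $\mathtt B$, so control lands in one of four \emph{sub-blocks} indexed by the observed pair $(a,b)\in\{\mathtt0,\mathtt1\}^2$ (no head has moved and nothing has been written between the two reads, so the cells really are $a$ and $b$ on entry). \textbf{(ii)~Act:} writing $\delta(q,a,b)=(q',(a',D_\mathtt A),(b',D_\mathtt B))$, the sub-block for $(a,b)$ emits $\mathtt A\mathtt0$ or $\mathtt A\mathtt1$ to write $a'$, then $\mathtt A D_\mathtt A$ if $D_\mathtt A\in\{\mathtt L,\mathtt R\}$ (nothing if $D_\mathtt A=\mathtt S$), and likewise writes $b'$ to tape $\mathtt B$ and optionally moves head $\mathtt B$. \textbf{(iii)~Jump:} control transfers to the first instruction of $B_{q'}$, at some absolute index $k$, via the two-instruction \emph{unconditional-jump gadget} $\langle\mathtt A\texttt{!}_k,\mathtt A\texttt{?}_k\rangle$ --- if the pointed cell is $\mathtt0$ the first instruction jumps to $\iota_k$, and if it is $\mathtt1$ control falls through to the second instruction, which also jumps to $\iota_k$, so $\iota_k$ is reached regardless of the tape content. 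A single unconditional jump is impossible because the non-branching instructions only advance to $\iota_{j+1}$, which is the only reason the gadget has length two. If $\delta(q,a,b)$ is undefined (so $M$ hangs there and $\varphi$ is undefined on such inputs), the $(a,b)$-sub-block is replaced by a one-instruction loop back to the head of $B_q$ --- $\mathtt A\texttt{!}_h$ when $a=\mathtt0$ and $\mathtt A\texttt{?}_h$ when $a=\mathtt1$, where $h$ is the index of $B_q$'s first instruction --- so $\BM\iota$ also hangs, which is harmless since we only need agreement on $\OP{dom}\varphi$. Every jump target used above is the head of a block, hence different from the interior positions occupied by the branches and gadgets that issue it, so the side condition $k\ne j$ on conditional jumps always holds.

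\emph{Correctness and complexity.} By induction on the number of steps of $M$: one pass through $B_q$ realizes exactly one application of $\delta$ (write, then move, then change state), preserving the invariant; hence $\BM\iota$ reaches $\texttt{\#}$ inside $B_{q_\text{halt}}$ iff $M$ reaches $q_\text{halt}$, and at that moment the tapes of $\BM\iota$ and $M$ are identical, so --- reading the output off the first tape by the same rule in both models --- they output the same string and $\BM\iota$ computes $\varphi$. Each step of $M$ costs at most a constant number $c(M)$ of executed $2$-PTM instructions (two branches, at most four write/move instructions, one length-two jump gadget), so $\BM\iota$ computes $\varphi$ within $\RO(t(n))$ steps; since the paper's convention lets the big-$\RO$ constant depend on the function, this gives $\SF{TIME}_2(t(n))\subseteq\SF{TIME}_{2\textnormal{-PTM}}(t(n))$.

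\emph{Expected main obstacle.} There is no genuine difficulty --- the construction is routine --- but three things need care: (a) realizing an unconditional jump (impossible in one instruction) and the ``hang'' loops within the $k\ne j$ restriction, via small gadgets; (b) the bookkeeping of the absolute instruction indices of the block heads when the $B_q$ are concatenated, so that every branch and jump target is correct and obeys $k\ne j$; and (c) fixing a single output convention shared by the two models, so that identical tapes at halting immediately yield equality of the computed functions.
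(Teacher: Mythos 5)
Your proposal takes essentially the same approach as the paper's proof: one constant-size instruction block per TM state, nested $\mathtt A\texttt?$/$\mathtt B\texttt?$ branches to read the two heads, constant-length write-then-move sub-blocks, and the identical two-instruction unconditional-jump gadget $\langle\mathtt A\texttt!_{k},\mathtt A\texttt?_{k}\rangle$ to transfer control to the target state's block head. The only detail you elide is that the paper first invokes Shannon's theorem to obtain a two-tape TM that maps $S(\BM x)$ to $S(\varphi(\BM x))$ (since the paper's $2$-PTM I/O convention is Shannon-encoded), a bookkeeping step your remark about fixing a ``single output convention'' gestures at but does not pin down.
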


\vspace{-1.5em}
\begin{proof}[Proof of Lemma~\ref{LEM:2ptm}]
For any computable function $\varphi\in\SF{TIME}_2(t(n))$, there is a two-tape TM $M=(Q,q_\text{start},q_\text{halt},\{\mathtt0,\mathtt1\},\textvisiblespace=\mathtt0,m=2,\delta)$ that maps $S(\BM x)$ to $S(\varphi(\BM x))$ within time $\RO(t(n))$, according to \cite{shannon1956universal}. Suppose w.l.o.g.\ that $Q=\{0,1,\dots,K\}$ ($K\in\BB N_+$) and that $q_\text{start}=0$, $q_\text{halt}=K$. We will construct a $2$-PTM $\BM\iota$ of length $|\BM\iota|=27K+1$ that simulates $M$ within time $\RO(t(n))$.

Recall that $\delta:(Q\setminus\{q_\text{halt}\})\times\{\mathtt0,\mathtt1\}^2\to Q\times(\{\mathtt0,\mathtt1\}\times\{\mathtt L,\mathtt S,\mathtt R\})^2$. 
We use exactly six instructions $\BM\eta_{(q',c_\mathtt A,d_\mathtt A,c_\mathtt B,d_\mathtt B)}$ to simulate each transition $(q',c'_\mathtt A,d_\mathtt A,c'_\mathtt B,d_\mathtt B)$, where $q'\in Q$, $c'_\mathtt A,c'_\mathtt B\in\{\mathtt0,\mathtt1\}$, $d_\mathtt A,d_\mathtt B\in\{\mathtt L,\mathtt S,\mathtt R\}$. If $d_\mathtt A,d_\mathtt B\ne\mathtt S$, we let
\AL{\BM\eta_{(q',c'_\mathtt A,d_\mathtt A,c'_\mathtt B,d_\mathtt B)}:=\langle\mathtt Ac'_\mathtt A,\mathtt A{d_\mathtt A},\mathtt Bc'_\mathtt B,\mathtt B{d_\mathtt B},\mathtt A\texttt!_{27q'},\mathtt A\texttt?_{27q'}\rangle.}
If $d_\mathtt A=\mathtt S$ or $d_\mathtt B=\mathtt S$, we can replace the corresponding $\mathtt Ad_\mathtt A$ or $\mathtt Bd_\mathtt B$ with $\mathtt Ac'_\mathtt A$ or $\mathtt Bc'_\mathtt B$, respectively, so $\BM\eta_{(q',c'_\mathtt A,d_\mathtt A,c'_\mathtt B,d_\mathtt B)}$ still has exactly six instructions. 

Then for each $q\in Q\setminus\{q_\text{halt}\}$, we use 27 instructions to simulate its transition rules:
\AL{\BM\iota_{27q:27q+27}:=\langle\mathtt A\texttt?_{27q+14},\mathtt B\texttt?_{27q+8},\BM\eta_{\delta(q,\mathtt0,\mathtt0)},\BM\eta_{\delta(q,\mathtt0,\mathtt1)},\mathtt B\texttt?_{27q+21},\BM\eta_{\delta(q,\mathtt1,\mathtt0)},\BM\eta_{\delta(q,\mathtt1,\mathtt1)}\rangle.}
For the halting state $q_\text{halt}=K$, we use one instruction $\iota_{27K}:=\texttt\#$ to simulate it.

Since $\BM\iota$ simulates each transition of $M$ by $\RO(1)$ steps, then $\BM\iota$ also has time complexity $\RO(t(n))$. 

Due to space limit, the proof of correctness of $\BM\iota$ 
is deferred to Appendix~\ref{app:pf-ptm}.
\end{proof}

We are now ready to prove Theorem~\ref{THM:ptm}. 

\begin{proof}[Proof sketch of Theorem~\ref{THM:ptm}]
Let $\varphi\in\SF{TIME}(t(n))$. Then, there exists a TM $M$ that computes $\varphi$ within $T(n)=\RO(t(n))$ steps for every length-$n$ input. 

\emph{Case 1:} There exists $n_0\in\BB N$ such that $T(n_0)<n_0$. Thus, the behavior of the TM $M$ depends only on the first $T(n_0)<n_0$ bits of the input. Hence, further increasing the length of the input does not change the behavior of $M$. Therefore, a tighter time complexity $\TLD T(n)$ of $M$ is
\AL{\TLD T(n)\le T(n_0)=\RO(1),\qquad\forall n\in\BB N.}
This implies that $\varphi\in\SF{TIME}_2(1)\subseteq\SF{TIME}_{2\textnormal{-PTM}}(1)\subseteq\SF{TIME}_{2\textnormal{-PTM}}(t(n)\log t(n))$.

\emph{Case 2:} $T(n)\ge n$ for all $n\in\BB N$. Then by the Hennie--Stearns theorem \citep{hennie1966two}, $\varphi\in\SF{TIME}_2(T(n)\log T(n))$. Therefore, by Lemma~\ref{LEM:2ptm},
\AL{\varphi\in\SF{TIME}_2(t(n)\log t(n))\subseteq\SF{TIME}_{2\textnormal{-PTM}}(t(n)\log t(n)).}
It follows from the above two cases that $\SF{TIME}(t(n))\subseteq\SF{TIME}_{2\textnormal{-PTM}}(t(n)\log t(n))$.
\end{proof}

Theorem~\ref{THM:ptm} shows that $2$-PTMs have only at most a logarithmic slowdown over TMs. In subsequent Sections~\ref{ssec:ptm-cot} \& \ref{ssec:ptm-prec}, we will use Theorem~\ref{THM:ptm} to characterize the CoT complexity and the precision complexity of our constructed $\varGamma$.

\subsection{CoT complexity}\label{ssec:ptm-cot}

In this subsection, we analyze the CoT complexity of our construction.  We will show that our constructed $\varGamma$ can compute any $\SF{TIME}_2(t(n))$ function within $\RO(t(n))$ CoT steps and any $\SF{TIME}(t(n))$ function within $\RO(t(n)\log t(n))$ CoT steps for any length-$n$ input. 

\begin{DEF}[CoT complexity class]
Let $\SF{CoT}_\varGamma(t(n))$ be the class of functions that our constructed Transformer $\varGamma$ can compute within $\RO(t(n))$ CoT steps. 
\end{DEF}

\begin{LEM}[CoT complexity for $2$-PTMs]\label{LEM:cot-2ptm}
$\SF{TIME}_{2\textnormal{-PTM}}(t(n))\subseteq\SF{CoT}_\varGamma(t(n))$.
\end{LEM}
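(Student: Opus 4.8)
The plan is to show that the Transformer $\varGamma$ constructed in Section~\ref{ssec:main-tf}, when fed the prompt $\BM\pi_\varphi$ encoding a $2$-PTM $\BM\iota$ that computes $\varphi$, produces a CoT whose length is within a constant factor of the number of execution steps of $\BM\iota$, plus the output length. Concretely, given $\varphi \in \SF{TIME}_{2\text{-PTM}}(t(n))$, pick a $2$-PTM $\BM\iota$ (computing $S(\varphi(\cdot))$ from $S(\cdot)$ via Shannon's encoding, as in the prompt construction) that halts within $\RO(t(n))$ steps on every length-$n$ input. I would then exhibit the map from execution steps to CoT steps ($C$ and $E$/$Z$ from Sections~\ref{ssec:main-cot}--\ref{ssec:main-input}) and verify (i) each execution step of $\BM\iota$ maps to $\RO(1)$ CoT tokens \emph{except} the go-to instructions $\tau\texttt!_k,\tau\texttt?_k$, which map to $\RO(|k-j|) = \RO(|\BM\iota|)$ tokens, and (ii) the extra input-writing steps $Z(\BM x)$ contribute $\RO(|\BM x|)$ tokens, and (iii) appending $\varphi(\BM x)$ contributes $\RO(|\varphi(\BM x)|)$ tokens. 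Since $|\BM\iota|$ is a constant depending only on $\varphi$ (allowed by the convention that constant factors may depend on the function), the total CoT length is $\RO(t(n)) + \RO(|\BM x|) + \RO(|\varphi(\BM x)|) = \RO(t(n))$, using $n = |\BM x| \le t(n)$ and $|\varphi(\BM x)| \le \RO(t(n))$ for a $2$-PTM running in time $t(n)$.

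The heart of the argument, however, is the \emph{correctness} claim: that $\OP{generate}_\varGamma(\BM\pi_\varphi \cdot \OP{tokenize}(\BM x))$ actually reproduces exactly the intended CoT, i.e., that at each step $\varGamma$ emits the token dictated by $C$ (resp. $E,Z$) applied to the true current configuration $(j,\iota_j,c_\mathtt A,c_\mathtt B)$ of $\BM\iota$. I would prove this by induction on the generation step, maintaining the invariant that after $m$ generated tokens, the prefix read out from the CoT so far faithfully encodes the first $m'$ execution steps of $\BM\iota$ (for the appropriate $m'$), and that $\varGamma$'s next-token output, computed from this prefix, recovers: (a) the current instruction index $j$ — by using farthest retrieval (Section~\ref{ssec:main-tf}) over the $\texttt+/\texttt-$ unary offsets recorded in prior go-to CoT tokens together with the $\texttt\#$/move/write tokens, to count net instruction-pointer displacement back to $\iota_0$; (b) the current instruction $\iota_j$ — by attending into the prompt $\BM\pi_\varphi$ at the position encoding $P(j,\iota_j)$; and (c) the pointed cells $c_\mathtt A, c_\mathtt B$ — by using farthest retrieval to locate the most recent CoT token that wrote to the current head position of each tape (head position itself being a running sum of $\mathtt L/\mathtt R$ moves, recovered the same way), with the equality-check-via-LN gadget deciding whether that cell was ever written (else it is the blank $\mathtt 0$, modulo the $\mathtt{00}$/Shannon convention). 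Given $(j,\iota_j,c_\mathtt A,c_\mathtt B)$, Boolean algebra via ReLU checks the go-to condition and selects the output token per Equation~\eqref{eq:cot-enc}.

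I expect the main obstacle to be step (c): correctly recovering the pointed tape-cell contents using only causal attention and the bounded-precision arithmetic available. The subtlety is that a tape head can revisit a cell arbitrarily many times, so "the current cell's content" is "the value written at the last CoT step whose cumulative head-displacement equals the current cumulative head-displacement", and one must retrieve the \emph{latest} such step (not the earliest, as in the stated farthest-retrieval gadget) — which requires a mirrored version of the retrieval argument (largest $i$ rather than smallest, flipping the sign of the $p_i$ perturbation), and must coexist with the separate retrieval used for recovering $j$ within the same layer stack. A second, more bookkeeping-heavy obstacle is handling the interleaving of the imaginary input-writing phase $Z(\BM x)$ with the real execution phase: the Transformer must recognize the $\texttt= \cdot \texttt-^{|Z(\BM x)|} \cdot \texttt@$ sentinel that resets the instruction pointer to $\iota_0$, and must treat tokens emitted during $Z(\BM x)$ as genuine $\mathtt{AL}/\mathtt{AR}/\mathtt{A1}$ operations on tape $\mathtt A$ (so that cell-content retrieval sees the input), while not mistaking them for instruction-pointer-advancing steps until after the reset. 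Both obstacles are essentially careful case analyses rather than deep difficulties, and since the excerpt defers the full Transformer construction to Appendix~\ref{app:pf-main}, I would prove Lemma~\ref{LEM:cot-2ptm} by invoking that construction's correctness lemma and then doing only the CoT-length accounting described in the first paragraph.
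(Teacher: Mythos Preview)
Your proposal is correct and takes essentially the same approach as the paper: the paper's proof sketch is precisely the CoT-length accounting you describe in your first paragraph and final sentence --- each non-go-to instruction costs one CoT token, each go-to costs at most $\RO(|\BM\pi_\varphi|)=\RO(1)$ tokens (since $|\BM\iota|$ is a constant depending only on $\varphi$), hence $\RO(t(n))$ execution steps yield $\RO(t(n))$ CoT steps --- with the correctness of $\varGamma$'s generation (your middle two paragraphs) already established by Theorem~\ref{THM:main} and deferred to Appendix~\ref{app:pf-main}. One minor clarification: the input-writing tokens $Z(\BM x)$ in your point (ii) are part of $\OP{tokenize}(\BM x)$ and hence part of the \emph{input} sequence to $\varGamma$, not generated CoT steps, so they need not be counted at all (though including them is harmless to the bound).
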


\vspace{-1em}
\begin{proof}[Proof sketch]
For any $\varphi\in\SF{TIME}_{2\textnormal{-PTM}}(t(n))$, since prompt $\BM\pi_\varphi$ has length $|\BM\pi_\varphi|=\RO(1)$, then by Section~\ref{ssec:main-cot}, each $\texttt{\#}$, $\tau\texttt<$, $\tau\texttt>$, $\tau\mathtt{0},$ $\tau\mathtt{1}$ takes $1=\RO(1)$ CoT step, and each ${\tau\texttt !}_k$, ${\tau\texttt ?}_k$ takes at most $\RO(|\BM\pi_\varphi|)=\RO(1)$ CoT steps. This implies that $\SF{TIME}_{2\textnormal{-PTM}}(t(n))\subseteq\SF{CoT}_\varGamma(t(n))$. 
\end{proof}

\begin{COR}[CoT complexity for TMs]\label{COR:cot}
For two-tape TMs, $\SF{TIME}_2(t(n))\subseteq\SF{CoT}_\varGamma(t(n))$. For general TMs, $\SF{TIME}(t(n))\subseteq\SF{CoT}_\varGamma(t(n)\log t(n))$.
\end{COR}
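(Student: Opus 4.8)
The plan is to obtain both inclusions by composing the three containments already proved in this section. For the two-tape statement I would chain Lemma~\ref{LEM:2ptm} with Lemma~\ref{LEM:cot-2ptm}, getting $\SF{TIME}_2(t(n))\subseteq\SF{TIME}_{2\textnormal{-PTM}}(t(n))\subseteq\SF{CoT}_\varGamma(t(n))$, where the first containment is the efficient two-tape simulation of Lemma~\ref{LEM:2ptm} and the second is Lemma~\ref{LEM:cot-2ptm}, which records each $2$-PTM step using $\RO(1)$ CoT steps. For the general statement I would instead chain Theorem~\ref{THM:ptm} with Lemma~\ref{LEM:cot-2ptm}, getting $\SF{TIME}(t(n))\subseteq\SF{TIME}_{2\textnormal{-PTM}}(t(n)\log t(n))\subseteq\SF{CoT}_\varGamma(t(n)\log t(n))$, using the logarithmic-slowdown simulation of Theorem~\ref{THM:ptm} followed again by Lemma~\ref{LEM:cot-2ptm}.

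The only points that need care are bookkeeping. Since Lemma~\ref{LEM:cot-2ptm} is stated for a complexity function, before invoking it on the bound $t(n)\log t(n)$ I would check that $\Theta(t(n)\log t(n))$ is again a complexity function: it is non-decreasing and tends to infinity whenever $t(n)\to\infty$, while the degenerate case of $t(n)$ equal to a small constant can be dispatched exactly as in Case~1 of the proof of Theorem~\ref{THM:ptm}, by noting that such a $\varphi$ already lies in $\SF{TIME}_{2\textnormal{-PTM}}(1)\subseteq\SF{CoT}_\varGamma(1)$. I would also confirm that the total CoT length is genuinely $\RO(t(n))$ (resp.\ $\RO(t(n)\log t(n))$): besides the $\RO(t(n))$ CoT steps recording the execution, the construction of Sections~\ref{ssec:main-cot}--\ref{ssec:main-input} emits $\RO(|\BM x|)$ CoT steps for the imaginary input-writing phase and $\RO(|\varphi(\BM x)|)$ CoT steps for the output, and both are absorbed into $\RO(t(n))$ because a $2$-PTM halting within $\RO(t(n))$ steps displaces its heads by at most $\RO(t(n))$ cells, so $|\varphi(\BM x)|=\RO(t(n))$, and any machine that must consult its whole length-$n$ input already spends $t(n)=\Omega(n)$ steps (a function influenced only by a prefix of its input again falls into the constant-time case).

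Beyond these routine checks there is no real obstacle: the corollary is essentially a one-line consequence of Lemma~\ref{LEM:2ptm}, Theorem~\ref{THM:ptm}, and Lemma~\ref{LEM:cot-2ptm}. If anything, the only mildly delicate part is expository, namely making explicit that $\SF{CoT}_\varGamma$ thereby inherits, up to the logarithmic Hennie--Stearns factor, the entire multi-tape time hierarchy of Turing machines, which is exactly the point the corollary is meant to convey.
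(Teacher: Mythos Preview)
Your proposal is correct and matches the paper's proof essentially line for line: the paper also derives the two-tape inclusion by chaining Lemma~\ref{LEM:2ptm} with Lemma~\ref{LEM:cot-2ptm} and the general inclusion by chaining Theorem~\ref{THM:ptm} with Lemma~\ref{LEM:cot-2ptm}. Your extra bookkeeping about $t(n)\log t(n)$ being a complexity function and about absorbing the input/output CoT steps is more explicit than the paper's two-line proof, but the underlying argument is identical.
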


\vspace{-1em}
\begin{proof}
For two-tape TMs, by Lemmas~\ref{LEM:2ptm} \& \ref{LEM:cot-2ptm},
\AL{&\SF{TIME}_2(t(n))\subseteq\SF{TIME}_{2\textnormal{-PTM}}(t(n))\subseteq\SF{CoT}_\varGamma(t(n)).\qedhere}
For general TMs, by Theorem~\ref{THM:ptm} \& Lemma~\ref{LEM:cot-2ptm},
\begin{equation*}
\SF{TIME}(t(n))\subseteq\SF{TIME}_{2\textnormal{-PTM}}(t(n)\log t(n))\subseteq\SF{CoT}_\varGamma(t(n)\log t(n)).\qedhere
\end{equation*}
\end{proof}

Corollary~\ref{COR:cot} shows prompting a \emph{single} Transformer can compute any $\SF{TIME}_2(t(n))$ function within $\RO(t(n))$ CoT steps and any $\SF{TIME}(t(n))$ function within $\RO(t(n)\log t(n))$ CoT steps. Notably, this is nearly the same as the CoT complexity of the class of \emph{all} Transformers, which can compute any $\SF{TIME}(t(n))$ function within $\RO(t(n))$ CoT steps. The logarithmic slowdown here is because the class of all Transformers can simulate an unbounded number of tapes while our single Transformer simulates only a finite number of tapes. Assuming $\SF{TIME}_2(t(n))\ne\SF{TIME}(t(n))$, it is unlikely that the CoT complexity $\RO(t(n)\log t(n))$ of $\varGamma$ for $\SF{TIME}(t(n))$ could be further improved to $\RO(t(n))$. 

\subsection{Precision complexity}\label{ssec:ptm-prec}

In this subsection, we analyze the precision 
complexity of our construction. We will show that our constructed $\varGamma$ can compute any $\SF{TIME}(t(n))$ function within $\RO(\log(n+t(n)))$ bits of precision for any length-$n$ input; in particular, it can decide any $\SF P$ language within log-precision. Following the common practice in numerical analysis, we assume that each floating-point number has \emph{significant} bits and \emph{guard} bits \citep{goodman1977effect}, 
where both significant bits and guard bits are used in arithmetic operations while guard bits are rounded off in number comparisons. 

\begin{DEF}[Precision complexity class]
For a complexity function $p(n)$, let $\SF{PREC}_\varGamma(p(n))$ be the class of functions that our constructed $\varGamma$ can compute using $\RO(p(n))$ significant and guard bits. 
\end{DEF}

\begin{COR}[Precision complexity]\label{COR:prec}
$\SF{TIME}(t(n))\subseteq\SF{PREC}_\varGamma(\log(n+t(n)))$\textnormal; $\SF P\subseteq\SF{PREC}_\varGamma(\log n)$. 
\end{COR}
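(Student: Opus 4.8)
The plan is to count the bits of precision needed to faithfully carry out all the arithmetic in $\varGamma$ across an entire generation, and to bound this in terms of the sequence length, which is itself controlled by the CoT complexity from Corollary~\ref{COR:cot}. First I would observe that by Corollary~\ref{COR:cot}, on any length-$n$ input $\BM x$ of a function $\varphi\in\SF{TIME}(t(n))$, the full generated sequence $\BM\pi_\varphi\cdot\OP{tokenize}(\BM x)\cdot(\text{CoT})$ has length $N=\RO(|\BM\pi_\varphi|+|\OP{tokenize}(\BM x)|+t(n)\log t(n))=\RO(n+t(n)\log t(n))$, using $|\OP{tokenize}(\BM x)|=\RO(n)$ and $|\BM\pi_\varphi|=\RO(1)$. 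So it suffices to show that every intermediate quantity computed by $\varGamma$ on a length-$N$ prefix can be represented with $\RO(\log N)$ significant and guard bits; since $\log N=\RO(\log(n+t(n)\log t(n)))=\RO(\log(n+t(n)))$, this gives the first containment, and the $\SF P$ case follows by taking $t(n)=\Poly(n)$, whence $\log(n+t(n))=\RO(\log n)$.

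Next I would go through the three core operations and the positional encoding identified in Section~\ref{ssec:main-tf} and check that each stays within $\RO(\log N)$ bits. The positional encoding $\OP{pos}(i)$ must encode the index $i\le N$ and quantities like $p_i=\Omega(1/(i+1)^4)$, $\frac1{i+1}$, and $\frac{\sum_{j=0}^i v_j}{i+1}$; all of these are rationals with numerator and denominator polynomially bounded in $N$, hence representable in $\RO(\log N)$ bits — but the layer-normalization step introduces square roots ($\sqrt{(\sum v_j)^2+1}$, $\sqrt{(i+1)^2+1}$), which are irrational, so here the \emph{guard bits} matter: I would argue that $\RO(\log N)$ guard bits suffice to compute the inner products $\BM u_i\Tp\BM u_{|\BM v|-1}+\frac{p_{|\BM v|-1}}{i+1}$ to high enough accuracy that the strict inequality separating the ``smallest correct $i$'' from all other indices — which has a gap of $\Omega(1/N^4)\cdot\frac1N$ after the correction term, i.e.\ $\Omega(1/\Poly(N))$ — is resolved correctly after the guard bits are rounded off in the comparison. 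The $\OP{Equal}$-via-$\OP{LN}$ operation is exact once $u=v$ versus $u\ne v$ is decided, and the only values ever fed to it are tape-cell markers that are $\RO(1)$-bounded integers or $\RO(\log N)$-bit position markers, so its analysis reduces to the same separated-inequality argument. Boolean algebra via ReLU is on $\{0,1\}$-valued inputs and is exact.

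Then I would bound the growth of values through the residual stream over the $L=\RO(1)$ Transformer layers: each layer adds an $\RO(1)$-Lipschitz ReLU network applied to a convex combination (hardmax attention) of value vectors, so the magnitude of any coordinate grows by at most a constant factor per layer, and since there are $\RO(1)$ layers and the embeddings/positional encodings are $\Poly(N)$-bounded, all intermediate values remain $\Poly(N)$-bounded and $\RO(\log N)$-precision throughout; the only subtlety is that hardmax itself requires comparing similarity scores, which is again the separated-inequality point handled above. Finally, for the output layer and the $\argmax$ over $\varSigma$ (a fixed finite set), exactness of the decoded token follows from the same bounded-precision guarantee. The main obstacle I expect is the square-root / layer-normalization accounting: I must verify that the $\Omega(1/\Poly(N))$ separation built into the positional-encoding quantities $p_i$ genuinely survives the accumulated rounding error of $\RO(\log N)$ guard bits across all $L=\RO(1)$ layers and all $\RO(N)$ generation steps, so that no comparison or hardmax ever flips — this is where I would be most careful, and it is exactly why the construction in Section~\ref{ssec:main-tf} was engineered to have a polynomially-large inequality gap rather than a merely positive one.
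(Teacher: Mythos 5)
Your proposal is correct and takes essentially the same approach as the paper: bound the total sequence length $I=\RO(n+t(n)\log t(n))$ via the CoT complexity (Corollary~\ref{COR:cot}), observe that all intermediate quantities are $\RO(1)$-bounded and that attention-score gaps are $\Omega(1/I^5)$ by the construction in Section~\ref{ssec:main-tf}, and conclude $\RO(\log I)=\RO(\log(n+t(n)))$ precision suffices, with the $\SF P$ case by specializing $t(n)=\Poly(n)$. One small clarification on your last worry: errors do not accumulate across the $\RO(N)$ generation steps, since each step reads exact discrete tokens from the prefix — only the $\RO(1)$ layers within a single generation step need a fixed rounding budget, so the per-step guard-bit accounting is all that is required.
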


\vspace{-1em}
\begin{proof}[Proof sketch]
For any function $\varphi\in\SF{CoT}_\varGamma(t(n)\log t(n))$, since the prompt $\BM\pi_\varphi$ has length $\RO(1)$, then by Section~\ref{ssec:main-input}, the total length $I$ of the tokenized input, the prompt, and the CoT steps is
\AL{I=\RO(n)+\RO(1)+\RO(t(n)\log t(n))=\RO(n+t(n)\log t(n)).}
Thus, according to Section~\ref{ssec:main-tf}, all the intermediate results during computation are at most $\RO(1)$, and attention similarities have mutual differences at least $\Omega\big(\frac1{I^5}\big)=\Omega\big(\frac1{(n+t(n)\log t(n))^5}\big)$. This implies
\ALN{eq:prec-2}{\SF{CoT}_\varGamma(t(n)\log t(n))\subseteq\SF{PREC}_\varGamma(\log((n+t(n)\log t(n))^5))=\SF{PREC}_\varGamma(\log(n+t(n))).}
It follows from Corollary~\ref{COR:cot} and Equation~\eqref{eq:prec-2} that
\AL{\SF{TIME}(t(n))\subseteq\SF{CoT}_\varGamma(t(n)\log t(n))\subseteq\SF{PREC}_\varGamma(\log(n+t(n))).}
In particular, we have
\AL{&\SF P\subset\SF{TIME}(\OP{poly}(n))\subseteq\SF{PREC}_\varGamma(\log(n+\OP{poly}(n)))=\SF{PREC}_\varGamma(\log n).\qedhere}
\end{proof}

\vspace{-0.25em}
Notably, Corollary~\ref{COR:prec} shows that prompting a \emph{single} Transformer can achieve the same precision complexity as that of the class of \emph{all} Transformers: it is known that the class of all Transformers can compute any $\SF{TIME}(t(n))$ function within $\RO(\log(n+t(n)))$ precision \citep{perez2021attention} while we further show a single Transformer with prompting can as well. This suggests our precision complexity is presumably tight unless there are further advances in the complexity theory of Transformers.


\section{Conclusion}
In this work, we have shown that prompting is in fact Turing-complete: there exists a finite-size Transformer such that for any computable function, there exists a corresponding prompt following which the Transformer computes the function. Furthermore, we have shown that even though we use only a single finite-size Transformer, it can still achieve nearly the same complexity bounds as that of the class of all unbounded-size Transformers. Overall, our result reveals that prompting can enable a single finite-size Transformer to be efficiently universal, which establishes a theoretical underpinning for prompt engineering in practice.

\subsubsection*{Acknowledgments}
This work is supported by NSF (%
2416070
). 
The content of the information in this document does not necessarily reflect the position or the policy of the Government, and no official endorsement should be inferred.  The U.S.\ Government is authorized to reproduce and distribute reprints for Government purposes notwithstanding any copyright notation here on.

\bibliography{output}
\bibliographystyle{iclr2025_conference}

\newpage
\appendix

\section*{Contents}
\ALIST{
    \AITEM{app:pf-ptm}
    \ALIST{
        \AITEM{app:pf-ptm-tm}
        \AITEM{app:pf-tm-ptm}
    }
    \AITEM{app:pf-main}
    \ALIST{
        \AITEM{app:pf-emb}
        \AITEM{app:pf-pos}
        \AITEM{app:pf-after-delim}
        \AITEM{app:pf-retr-tape}
        \AITEM{app:pf-prompt}
        \AITEM{app:pf-prog}
        \AITEM{app:pf-next-cot}
        \AITEM{app:pf-readout}
        \AITEM{app:pf-next-token}
        \AITEM{app:pf-impl}
        \AITEM{app:pf-size}
    }
    \AITEM{app:exp}
    \AITEM{app:concl}
}

\ALG{ALG:generate}{Autoregressive generation: $\OP{generate}_\varGamma(\BM x)$}{
\Require{decoder-only Transformer $\varGamma:\varSigma^+\to\varSigma$; nonempty input $\BM x\in\varSigma^+$; stop token $\texttt\textdollar\in\varSigma$}
\Ensure{generated output $\in\varSigma^+\cup\varSigma^\upomega$}
\Repeat
    \State generate next token $c\gets\varGamma(\BM x)$
    \State append next token $\BM x\gets\BM x\cdot c$
\Until{$c=\texttt\textdollar$}
\State\Return generated output $\BM x$
}

\ALG{ALG:readout}{Extracting output from CoT ($\OP{readout}$)}{
\Require{generated CoT $\BM v\in\varSigma^+$}
\Ensure{extracted output $\BM y\in\{\mathtt0,\mathtt1\}^*$}
\State initialize the left index $i\gets -2$
\While{$v_i\ne\texttt:$}
    \State decrement the left index $i\gets i-1$
\EndWhile
\State\Return extracted output $\BM v_{i+1:-1}$
}

\ASEC{Two-tape TMs v.s.\ $2$-PTMs}{app:pf-ptm}
In this section, we characterize the relation between two-tape TMs and $2$-PTMs.

\ASSEC{$2$-PTMs simulating two-tape TMs}{app:pf-ptm-tm}
In this subsection, we show the correctness of the construction presented in Section~\ref{ssec:ptm-sim}. 

For each non-halting state $q\in Q\setminus\{q_\text{halt}\}$, since we use $1+1+6+6+1+6+6=27$ instructions to simulate its transition rules, we put these instructions at $\BM\iota_{27q:27q+27}$. Thus, the instructions for state $q$ starts at $\iota_{27q}$. Besides that, for the halting state $q_\text{halt}=K$, we put the halting instruction after the last non-halting state $K-1$. Thus, the halting instruction is at $\iota_{27(K-1)+27}=\iota_{27K}=\iota_{27q_\text{halt}}$. To recapitulate, the instructions for every state $q\in Q$ start at $\iota_{27q}$. 

Hence, to make a state transition to $q'\in Q$, we should go to instruction $\iota_{27q'}$. Since we do not know pointed cell values after we move tape heads, We use two go-to instructions $\langle\mathtt A\texttt!_{27q'},\mathtt A\texttt?_{27q'}\rangle$ with opposite conditions to ensure a state transition to $q'$. This establishes the correctness of $\BM\eta_{(q',c_\mathtt A,d_\mathtt A,c_\mathtt B,d_\mathtt B)}$.

To simulate transition rules $\delta(q,\mathtt0,\mathtt0),\delta(q,\mathtt0,\mathtt1),\delta(q,\mathtt1,\mathtt0),\delta(q,\mathtt1,\mathtt1)$ for each non-halting state $q\in Q\setminus\{q_\text{halt}\}$, we first use $\mathtt A\texttt?_{27q+14}$ to check the pointed cell value of tape $\mathtt A$ and then use $\mathtt B\texttt?_{27q+8}$ and $\mathtt B\texttt?_{27q+21}$ to check the pointed cell value of tape $\mathtt B$, where it is easy to see that the indices $27q+14$, $27q+8$, and $27q+21$ are correct. 

Finally, note that the simulation is always valid because $2$-PTMs have bi-directional tapes while TMs have uni-directional tapes. This concludes the correctness of the constructed $2$-PTM $\BM\iota$. 

It follows that $\SF{TIME}_2(t(n))\subseteq\SF{TIME}_{2\textnormal{-PTM}}(t(n))$.

\ASSEC{Bi-infinite two-tape TMs simulating $2$-PTMs}{app:pf-tm-ptm}
In this subsection, we show that any $2$-PTM can be simulated by a TM over two \emph{bi}-infinite tapes.

For any $\varphi\in\SF{TIME}_{2\textnormal{-PTM}}(t(n))$, there exists a $2$-PTM $\BM\iota$ that computes $S(\varphi(\BM x))$ from $S(\BM x)$ within time $\RO(t(n))$. Let $K:=|\BM\iota|$. We will construct a two-tape TM $M=(Q:=\{0,1,\dots,K\},q_\text{start}:=0,q_\text{halt}:=K,\{\mathtt0,\mathtt1\},\textvisiblespace:=\mathtt0,m:=2,\delta:(Q\setminus\{q_\text{halt}\})\times\{\mathtt0,\mathtt1\}^2\to Q\times(\{\mathtt0,\mathtt1\}\times\{\mathtt L,\mathtt S,\mathtt R\})^2)$ that simulates the $2$-PTM $\BM\iota$ within time $\RO(t(n))$, where we simulate each instruction $\iota_j$ ($j=0,\dots,|\BM\iota|-1$) using one state $j$. Let $c_\mathtt A,c_\mathtt B\in\{\mathtt0,\mathtt1\}$ denote the pointed cell value on tapes $\mathtt A$ and $\mathtt B$, respectively.

If $\iota_j=\texttt\#$, then we only make a transition to the halting state $q_\text{halt}$:
\AL{\delta(j,c_\mathtt A,c_\mathtt B):=(q_\text{halt},c_\mathtt A,\mathtt S,c_\mathtt B,\mathtt S).}
If $\iota_j=\mathtt{A}d_\mathtt A$ ($d_\mathtt A\in\{\mathtt L,\mathtt R\}$), then we move the head for tape $\mathtt A$ and keep the head for tape $\mathtt B$:
\AL{\delta(j,c_\mathtt A,c_\mathtt B):=(j+1,c_\mathtt A,d_\mathtt A,c_\mathtt B,\mathtt S).}
If $\iota_j=\mathtt{B}d_\mathtt B$ ($d_\mathtt B\in\{\mathtt L,\mathtt R\}$), then we move the head for tape $\mathtt B$ and keep the head for tape $\mathtt A$:
\AL{\delta(j,c_\mathtt A,c_\mathtt B):=(j+1,c_\mathtt A,\mathtt S,c_\mathtt B,d_\mathtt B).}
If $\iota_j=\mathtt{A}c'_\mathtt A$ ($c'_\mathtt A\in\{\mathtt 0,\mathtt 1\}$), then we write $c'_\mathtt A$ to tape $\mathtt A$ and keep tape heads:
\AL{\delta(j,c_\mathtt A,c_\mathtt B):=(j+1,c'_\mathtt A,\mathtt S,c_\mathtt B,\mathtt S).}
If $\iota_j=\mathtt{B}c'_\mathtt B$ ($c'_\mathtt B\in\{\mathtt 0,\mathtt 1\}$), then we write $c'_\mathtt B$ to tape $\mathtt B$ and keep tape heads:
\AL{\delta(j,c_\mathtt A,c_\mathtt B):=(j+1,c_\mathtt A,\mathtt S,c'_\mathtt B,\mathtt S).}
If $\iota_j=\tau\texttt!_k$ ($\tau\in\{\mathtt A,\mathtt B\}$), we go to $k$ if the pointed cell on tape $\tau$ is $\mathtt0$ and to $j+1$ otherwise:
\AL{\delta(j,c_\mathtt A,c_\mathtt B):=\begin{cases}
(k,c_\mathtt A,\mathtt S,c_\mathtt B,\mathtt S)&\text{if }c_\tau=\mathtt0,\\
(j+1,c_\mathtt A,\mathtt S,c_\mathtt B,\mathtt S)&\text{if }c_\tau\ne\mathtt0.
\end{cases}}
If $\iota_j=\tau\texttt?_k$ ($\tau\in\{\mathtt A,\mathtt B\}$), we go to $k$ if the pointed cell on tape $\tau$ is $\mathtt1$ and to $j+1$ otherwise:
\AL{\delta(j,c_\mathtt A,c_\mathtt B):=\begin{cases}
(k,c_\mathtt A,\mathtt S,c_\mathtt B,\mathtt S)&\text{if }c_\tau=\mathtt1,\\
(j+1,c_\mathtt A,\mathtt S,c_\mathtt B,\mathtt S)&\text{if }c_\tau\ne\mathtt1.
\end{cases}}
The correctness of the construction above is clear. Since $M$ simulates each $\iota_j$ through $1=\RO(1)$ transition, then $M$ also has time complexity $\RO(t(n))$. 

\ASEC{Proof of Theorem~\ref{THM:main}}{app:pf-main}
In this section, we present our detail construction of the Transformer $\varGamma$ as the proof of Theorem~\ref{THM:main}. We will show that we can execute the constructed prompts using affine maps, ReLU activation, layer normalization, and causal attention. Some key ideas are presented in Section~\ref{ssec:main-tf} in the main paper. Unless otherwise specified, the similarity map we use in causal attention is the identity function. 

\ASSEC{Embedding layer}{app:pf-emb}
Here, we present our construction of the token embedding and the positional encoding in the embedding layer $\varGamma_\text{emb}$. Let $\BM v=v_0\cdots v_{|\BM v|-1}\in\varSigma^+$ denote the input token sequence of the Transformer. For each token $v_i$ ($i=0,\dots,v_{|\BM v|-1}$), we use the one-hot representation as its embedding:
\AL{z^{\text{is }\sigma}_i:=1_{[v_i=\sigma]},\qquad\sigma\in\varSigma.}
As described in Section~\ref{ssec:main-tf}, we use the following positional encoding for each position $i$:
\AL{
p_i&:=1-\frac{(i+1)(i+2)+1}{\sqrt{(i+1)^2+1}\sqrt{(i+2)^2+1}}.
}


\ASSEC{Other positional identifiers}{app:pf-pos}
In addition to the positional encoding, we compute three other positional identifiers to be used later. First, we use causal attention with query $1$, key $1$, and value $z^{\text{is }\texttt\textasciicircum}_j$ to compute
\AL{z^{\text{pos, 1}}_i:=\frac1{i+1}\sum_{j=0}^iz^{\text{is }\texttt\textasciicircum}_j=\frac1{i+1}.}
Next, we use layer normalization ($\OP{LN}$) to compute two other positional identifiers:
\AL{(z^{\text{pos, 2}}_i,z^{\text{pos, 3}}_i):=\OP{LN}(1,z^{\text{pos, 1}}_i)=\bigg(\frac{i+1}{\sqrt{(i+1)^2+1}},\frac{1}{\sqrt{(i+1)^2+1}}\bigg).}

\ASSEC{Parsing CoT steps}{app:pf-after-delim}
Since the prompt and the CoT share some tokens, we need distinguish CoT steps from the prompt according to the position of the delimiter token $\texttt\textdollar$. Specifically, we need an indicator of whether each token $v_i$ is located after the delimiter token $\texttt\textdollar$ or not. Thus, we use causal attention with query $1$, key $1$, and value $z^{\text{is }\texttt\textdollar}$ to compute the discounted indicator:
\AL{z^{\text{after delim, disc}}_i:=\frac1{i+1}\sum_{j=0}^iz^{\text{is }\texttt\textdollar}_j=\frac{1_{[\exists j\le i:v_j=\texttt\textdollar]}}{i+1}.}
Then, we use layer normalization to convert it to a Boolean value:
\AL{z^{\text{after delim}}_i:=\OP{LN}(z^{\text{after delim, disc}}_i)=1_{[\exists j\le i:v_j=\texttt\textdollar]}.}
Next, we check if a token is a CoT step of writing a cell using $\OP{ReLU}$-implemented Boolean algebra:
\AL{
z^{\text{is }\tau\text{ write}}_i&:=\OP{ReLU}(z^{\text{is }\tau\mathtt{0}}_i+z^{\text{is }\tau\mathtt{1}}_i+z^{\text{after delim}}_i-1),\quad\tau\in\{\mathtt A,\mathtt B\};
}
we can compute the value that is being written using $\OP{ReLU}$:
\AL{
z^{\tau\text{ write}}_i&:=\OP{ReLU}(z^{\text{is }\tau\mathtt{1}}_i+z^{\text{after delim}}_i-1),\qquad\tau\in\{\mathtt A,\mathtt B\};
}
we can also compute the direction of head move in a CoT step using $\OP{ReLU}$:
\AL{
z^{\tau\text{ move}}_i&:=\OP{ReLU}(z^{\text{is }\tau\mathtt{R}}_i+z^{\text{after delim}}_i-1)-\OP{ReLU}(z^{\text{is }\tau\mathtt{L}}_i+z^{\text{after delim}}_i-1),\quad\tau\in\{\mathtt A,\mathtt B\}.
}

\ASSEC{Retrieving pointed tape cells}{app:pf-retr-tape}
Before retrieving the pointed tape cells, we need to compute the current positions of tape heads. Since attention cannot compute sums directly, we use the idea presented in Section~\ref{ssec:main-tf} to compute a normalized representation of the position. Specifically, we first use causal attention with query $1$, key $1$, and value $z^{\tau\text{ move}}_j$ to compute discounted head positions:
\AL{
z^{\tau\text{ cur disc}}_i&:=\frac1{i+1}\sum_{j=0}^iz^{\tau\text{ move}}_j,\qquad\tau\in\{\mathtt A,\mathtt B\}.
}
Then, we use $z^{\text{pos, 1}}_i=\frac1{i+1}$ and $\OP{LN}$ to compute a normalized representation of head positions:
\AL{
&(z^{\tau\text{ cur norm}}_i,z^{\tau\text{ cur one norm}}_i):={}\OP{LN}(z^{\tau\text{ cur disc}}_i,z^{\text{pos, 1}}_i)\\
={}&\bigg(\frac{\sum_{j=0}^iz^{\tau\text{ move}}_j}{\sqrt{(\sum_{j=0}^iz^{\tau\text{ move}}_j)^2+1}},\frac{1}{\sqrt{(\sum_{j=0}^iz^{\tau\text{ move}}_j)^2+1}}\bigg),\quad\tau\in\{\mathtt A,\mathtt B\}.
}
Next, we can retrieve the pointed tape cells by finding the last write step at the same position. We use causal attention with query $(1,z^{\tau\text{ cur norm}}_i,z^{\tau\text{ cur one norm}}_i,p_i)\Tp$, key $(z^{\text{is }\tau\text{ write}}_j,z^{\tau\text{ cur norm}}_j,z^{\tau\text{ cur one norm}}_j,-z^{\text{pos, 1}}_j)\Tp$, and value $(z^{\tau\text{ write}}_j,z^{\tau\text{ cur norm}}_j,z^{\text{is }\tau\text{ write}}_j)\Tp$ to compute the retrieved tape cells $(z^{\tau\text{ retr}}_i,z^{\tau\text{ retr cur norm}}_i,z^{\tau\text{ retr is write}}_i)\Tp$ ($\tau\in\{\mathtt A,\mathtt B\}$). 
Note that we also compute $z^{\tau\text{ retr cur norm}}_i$ and $z^{\tau\text{ retr is write}}_i$ here because there is a caveat: the pointed tape cells might not have been written yet. We use $\OP{LN}$ and $\OP{ReLU}$ to compute an indicator of whether the pointed tape cells have not been written:
\AL{
z^{\tau\text{ not found}}_i:={}&\OP{ReLU}(\OP{LN}(z^{\tau\text{ cur norm}}_i-z^{\tau\text{ retr cur norm}}_i))+\OP{ReLU}(\OP{LN}(z^{\tau\text{ retr cur norm}}_i-z^{\tau\text{ cur norm}}_i))\nonumber\\
={}&1_{[z^{\tau\text{ cur norm}}_i\ne z^{\tau\text{ retr cur norm}}_i]},\qquad\qquad\qquad\qquad\qquad\qquad\qquad\qquad\qquad\tau\in\{\mathtt A,\mathtt B\}.}
If a retreived tape cell has not been written yet, it must have a blank value $\mathtt0$:
\AL{z^{\tau\text{ val}}_i:=\OP{ReLU}(z^{\tau\text{ retr}}_i-z^{\tau\text{ not found}}_i+z^{\tau\text{ retr is write}}_i-1),\quad\tau\in\{\mathtt A,\mathtt B\}.}

\ASSEC{Parsing instructions in prompt}{app:pf-prompt}
Recall that each instruction is encoded into one or more tokens. Thus, we first compute whether each token is the start of a instruction in the prompt:
\AL{z^{\text{is inst}}_i:=\OP{ReLU}(z^{\text{is }\texttt\#}_i&+z^{\text{is }\mathtt{AL}}_i+z^{\text{is }\mathtt{BL}}_i+z^{\text{is }\mathtt{AR}}_i+z^{\text{is }\mathtt{BR}}_i\\
&+z^{\text{is }\mathtt{A0}}_i+z^{\text{is }\mathtt{B0}}_i+z^{\text{is }\mathtt{A1}}_i+z^{\text{is }\mathtt{B1}}_i\\
&+z^{\text{is }\mathtt{A}\texttt!}_i+z^{\text{is }\mathtt{B}\texttt!}_i+z^{\text{is }\mathtt{A}\texttt?}_i+z^{\text{is }\mathtt{B}\texttt?}_i-z^{\text{after delim}}_i).}
We compute a \emph{program index} $t$ for the start token of each instruction. We use causal attention with query $1$, key $1$, and value $z^{\text{is inst}}_j$ to compute the program index of each token in the prompt:
\AL{z^{\text{prog idx disc, raw}}_i:=\frac1{i+1}\sum_{j=0}^iz^{\text{is inst}}_j.}
We further subtract $1$ from $\sum_{j=0}^iz^{\text{is inst}}_j$ to handle the lag between the prompt and the CoT:
\AL{z^{\text{prog idx disc}}_i:=z^{\text{prog idx disc, raw}}_i-z^{\text{pos, 1}}=\frac1{i+1}\sum_{j=0}^iz^{\text{is inst}}_j-\frac1{i+1}.}
Then, we use $\OP{LN}$ to compute a normalized representation of $\sum_{j=0}^iz^{\text{is inst}}_j-1$:
\AL{&(z^{\text{prog idx norm}}_i,z^{\text{prog idx one norm}}_i):=\OP{LN}(z^{\text{prog idx disc}}_i,z^{\text{pos, 1}}_i)\\
={}&\bigg(\frac{\sum_{j=0}^iz^{\text{is inst}}_j-1}{\sqrt{(\sum_{j=0}^iz^{\text{is inst}}_j-1)^2+1}},\frac{1}{\sqrt{(\sum_{j=0}^iz^{\text{is inst}}_j-1)^2+1}}\bigg).}
Besides that, since go-to's are special instructions that need multiple tokens to encode, we also check whether a token is the start of a go-to instruction:
\AL{z^{\text{is goto cond}}_i:=z^{\text{is }\mathtt{A}\texttt!}_i+z^{\text{is }\mathtt{B}\texttt!}_i+z^{\text{is }\mathtt{A}\texttt?}_i+z^{\text{is }\mathtt{B}\texttt?}_i.}
We also need a \emph{go-to index} for tokens $\texttt-,\texttt+,\texttt@$ in the prompt to mark the order of tokens within each go-to instruction. To compute it, we first compute the reciprocal number of tokens between $v_i$ and the start token $v_{i'}$ of the current go-to instruction, using causal attention with query $1$, key $z^{\text{prog idx norm}}_j$, and value $z^{\text{is goto cond}}_j$:
\AL{z^{\text{goto one disc}}_i:=\frac{\sum_{j=i'}^iz^{\text{is goto cond}}_j}{\sum_{j=i'}^i1}=\frac{1}{i-i'+1}.}
Then, we use $\OP{LN}$ to compute a normalized representation of the go-to index:
\AL{(z^{\text{goto idx norm, raw}}_i,z^{\text{goto one norm, raw}}_i):={}&\OP{LN}(1-z^{\text{goto one disc}}_i,z^{\text{goto one disc}}_i)\\
={}&\bigg(\frac{i-i'+1}{\sqrt{(i-i'+1)^2+1}},\frac{1}{\sqrt{(i-i'+1)^2+1}}\bigg).}
To match the go-to index of non-go-to tokens, we finally adjust them if $v_i$ is the start token $v_i'$ of the current go-to instruction:
\AL{
z^{\text{goto idx norm}}_i&:=z^{\text{goto idx norm, raw}}_i+z^{\text{is goto cond}}_i,\\
z^{\text{goto one norm}}_i&:=z^{\text{goto one norm, raw}}_i-z^{\text{is goto cond}}_i.
}
That is, when $v_i$ is the start token $v_i'$ of a go-to instruction, we instead have 
\AL{(z^{\text{goto idx norm}}_i,z^{\text{goto one norm}}_i)=(1,0).} 

\ASSEC{Locating current CoT step}{app:pf-prog}
Here, we describe how to locate the current instruction to execute. We can imagine a program pointer $t$ that indicates the instruction $\iota_t$ we are currently executing. A caveat here is that each go-to instruction needs multiple CoT steps. Thus, it is important to check whether it is during a go-to instruction or not. 

First, we compute how each go-to step contributes to the program pointer via $\OP{ReLU}$:
\AL{z^{\text{prog move}}_i:=\OP{ReLU}(z^{\text{is }\texttt+}_i+z^{\text{after delim}}_i-1)-\OP{ReLU}(z^{\text{is }\texttt-}_i+z^{\text{after delim}}_i-1).}
Then, we use causal attention with query $1$, key $1$, and value $z^{\text{prog move}}_i$ to compute the discounted total contribution:
\AL{z^{\text{prog move disc}}_i:=\frac1{i+1}\sum_{j=0}^iz^{\text{prog move}}_j;}
and use $\OP{LN}$ to compute a normalized representation of $\sum_{j=0}^iz^{\text{prog move}}_j$:
\AL{
&(z^{\text{prog move norm}}_i,z^{\text{prog move one norm}}_i):=\OP{LN}(z^{\text{prog move disc}}_i,z^{\text{pos, 1}}_i)\\
={}&\bigg(\frac{\sum_{j=0}^iz^{\text{prog move}}_j}{\sqrt{(\sum_{j=0}^iz^{\text{prog move}}_j)^2+1}},\frac{1}{\sqrt{(\sum_{j=0}^iz^{\text{prog move}}_j)^2+1}}\bigg).}
Next, we check whether the token $v_i$ is the start token $v_i'$ of the current execution step record in the CoT:
\AL{z^{\text{is rec start}}_i:=z^{\text{is }\texttt\textasciicircum}_i+\OP{ReLU}(&z^{\text{is }\mathtt{AL}}_i+z^{\text{is }\mathtt{BL}}_i+z^{\text{is }\mathtt{AR}}_i+z^{\text{is }\mathtt{BR}}_i\\
+{}&z^{\text{is }\mathtt{A0}}_i+z^{\text{is }\mathtt{B0}}_i+z^{\text{is }\mathtt{A1}}_i+z^{\text{is }\mathtt{B1}}_i\\
+{}&z^{\text{is }\texttt/}_i+z^{\text{is }\texttt=}_i+z^{\text{after delim}}_i-1),}
where we also add $z^{\text{is }\texttt\textasciicircum}_i$ here for convenience later. Similarly, we check whether the token $v_i$ is the ending token $v_i'$ of the current execution step record in the CoT:
\AL{z^{\text{is rec end}}_i:=\OP{ReLU}(z^{\text{is }\texttt\textdollar}_i+&z^{\text{is }\mathtt{AL}}_i+z^{\text{is }\mathtt{BL}}_i+z^{\text{is }\mathtt{AR}}_i+z^{\text{is }\mathtt{BR}}_i\\
+{}&z^{\text{is }\mathtt{A0}}_i+z^{\text{is }\mathtt{B0}}_i+z^{\text{is }\mathtt{A1}}_i+z^{\text{is }\mathtt{B1}}_i\\
+{}&z^{\text{is }\texttt@}_i+z^{\text{after delim}}_i-1).}
Then, imagine that each token has a \emph{record index} to mark which each execution step it belongs to. To compute it, we first compute the reciprocal number of execution steps $t'$ so far, using causal attention with query $1$, key $z^{\text{is rec start}}_j$, and value $z^{\text{is }\texttt\textasciicircum}_j$:
\AL{z^{\text{prog rec one disc}}_i:=\frac{\sum_{j=0}^iz^{\text{is rec start}}_jz^{\text{is }\texttt\textasciicircum}_j}{\sum_{j=0}^iz^{\text{is rec start}}_j}=\frac1{t'+1}.}
We then compute a normalized representation of $t'+1$ using $\OP{LN}$:
\AL{(z^{\text{prog rec norm}}_i,z^{\text{prog rec one norm}}_i):={}&\OP{LN}(1,z^{\text{prog rec one disc}}_i)\\
={}&\bigg(\frac{t'+1}{\sqrt{(t'+1)^2+1}},\frac{1}{\sqrt{(t'+1)^2+1}}\bigg).}
Next, since each go-to execution step needs multiple CoT steps, we need to handle go-to execution step records specially when the go-to condition is satisfied. We can image that each CoT step has a \emph{temporary program index} that marks the order of each token in a go-to execution step record. To compute it, we first compute the reciprocal number of tokens between $v_i$ and the start token $v_{i'}$ of the current go-to execution step record, using causal attention with query $1$, key $-z^{\text{prog rec one disc}}_j$, and value $z^{\text{is }\texttt=}_j$:
\AL{z^{\text{prog tmp one disc}}_i:=\frac{\sum_{j=i'}^iz^{\text{is }\texttt=}_j}{\sum_{j=i'}^i1}=\frac1{i-i'+1}.}
With this, we can compute a normalized representation of $i-i'+1$ using $\OP{LN}$:
\AL{(z^{\text{prog tmp norm, raw}}_i,z^{\text{prog tmp one norm, raw}}_i):={}&\OP{LN}(1,z^{\text{prog tmp one disc}}_i)\\
={}&\bigg(\frac{i-i'+1}{\sqrt{(i-i'+1)^2+1}},\frac{1}{\sqrt{(i-i'+1)^2+1}}\bigg).}
To match the temporary program index of non-go-to execution steps, we further adjust them if $v_i$ is the ending token of the current go-to instruction:
\begin{gather}
z^{\text{prog tmp norm}}_i:=\OP{ReLU}(z^{\text{prog tmp norm, raw}}_i-z^{\text{is rec end}}_i)+z^{\text{is rec end}}_i,\\
z^{\text{prog tmp one norm}}_i:=\OP{ReLU}(z^{\text{prog tmp one norm, raw}}_i-z^{\text{is rec end}}_i).
\end{gather}
That is, when $v_i$ is the ending token of the current go-to instruction, we instead have 
\AL{(z^{\text{prog tmp norm}}_i,z^{\text{prog tmp one norm}}_i)=(1,0).}
We will use the quantities above to identify the instruction to execute.

\ASSEC{Executing next CoT step}{app:pf-next-cot}
To generate the next token, we need to execute the current instruction and record it via a CoT step. Before that, we check whether $v_i$ belongs to a satisfied go-to execution step record:
\AL{z^{\text{is rec goto}}_i:=\OP{ReLU}(z^{\text{is }\texttt=}_i+z^{\text{is }\texttt-}_i+z^{\text{is }\texttt+}_i+z^{\text{after delim}}_i-1).}
Another quantity we will need is how each CoT step contributes to the current program pointer:
\AL{z^{\text{prog cur move}}_i:=\OP{ReLU}(&z^{\text{is }\mathtt{AL}}_i+z^{\text{is }\mathtt{BL}}_i+z^{\text{is }\mathtt{AR}}_i+z^{\text{is }\mathtt{BR}}_i\\
+{}&z^{\text{is }\mathtt{A0}}_i+z^{\text{is }\mathtt{B0}}_i+z^{\text{is }\mathtt{A1}}_i+z^{\text{is }\mathtt{B1}}_i\\
+{}&z^{\text{is }\texttt/}_i+z^{\text{is }\texttt+}_i+z^{\text{after delim}}_i-1)-\OP{ReLU}(z^{\text{is }\texttt-}_i+z^{\text{after delim}}_i-1).}
We also compute an auxiliary bound $z^{\text{prog rec diff}}_i$ on the differences between some attention scores using causal attention with query $(z^{\text{prog rec norm}}_i,z^{\text{prog rec one norm}}_i)\Tp$, key $(z^{\text{pos, 2}}_j,z^{\text{pos, 3}}_j)\Tp$, and value $p_j$. We further compute an auxiliary quantity:
\AL{z^{\text{prog rec bias}}_i:=3-\frac12\OP{ReLU}(z^{\text{prog rec diff}}_i+2z^{\text{is rec goto}}_i-2).}
It is easy to see that $z^{\text{prog rec diff}}_i<3$ if and only if $z^{\text{is rec goto}}_i=1$. Using this quantity, we will be able to avoid attending to tokens outside the current execution step record. Next, we use it to identify the next instruction to execute. We first compute the discounted program index of the instruction that we need to execute, which is proportional to the sum of $z^{\text{prog cur move}}_j$ until the beginning token $v_{i'}$ of the current execution step record, using causal attention with query $(z^{\text{prog rec bias}}_i,-z^{\text{prog rec norm}}_i,-z^{\text{prog rec one norm}}_i)\Tp$, key $(1,z^{\text{prog rec norm}}_j,z^{\text{prog rec one norm}}_j)\Tp$, value $(z^{\text{prog cur move}}_j,z^{\text{is }\texttt\textasciicircum}_j)\Tp$, and similarity function $\OP{sim}(x):=2-\OP{ReLU}(2-x)=\min\{x,2\}$:
\begin{gather}
z^{\text{prog cur disc}}_i:=\frac{\sum_{j=0}^{i'}z^{\text{prog cur move}}_j}{\sum_{j=0}^{i'}1}=\frac{\sum_{j=0}^{i'}z^{\text{prog cur move}}_j}{i'+1},\\
z^{\text{prog cur one disc}}_i:=\frac{\sum_{j=0}^{i'}z^{\text{is }\texttt\textasciicircum}_j}{\sum_{j=0}^{i'}1}=\frac{1}{i'+1}. 
\end{gather}
Then, we use $\OP{LN}$ to compute a normalized representation of $\sum_{j=0}^{i'}z^{\text{prog cur move}}_j$:
\AL{&(z^{\text{prog cur norm}}_i,z^{\text{prog cur one norm}}_i):=\OP{LN}(z^{\text{prog cur disc}}_i,z^{\text{prog cur one disc}}_i)\\
={}&\bigg(\frac{\sum_{j=0}^iz^{\text{prog cur move}}_j}{\sqrt{(\sum_{j=0}^iz^{\text{prog cur move}}_j)^2+1}},\frac{1}{\sqrt{(\sum_{j=0}^iz^{\text{prog cur move}}_j)^2+1}}\bigg).}
Next, we find the instruction $\iota_t$ whose program index matches the current program pointer and whose go-to index matches the current temporary program index, using causal attention with query $(z^{\text{prog cur norm}}_i,z^{\text{prog cur one norm}}_i,z^{\text{prog tmp norm}}_i,z^{\text{prog tmp one norm}}_i,p_i,-1)\Tp$, key $(z^{\text{prog idx norm}}_j,z^{\text{prog idx one norm}}_j,z^{\text{goto idx norm}}_j,z^{\text{goto one norm}}_j,z^{\text{pos, 1}}_j,1)\Tp$, and value $z^{\text{is }\sigma}_j$:
\AL{z^{\text{token is }\sigma}_i&:=z^{\text{is }\sigma}_t,\quad\sigma\in\{\texttt\#,\mathtt A\mathtt L,\mathtt B\mathtt L,\mathtt A\mathtt R,\mathtt B\mathtt R,\mathtt A\mathtt 0,\mathtt B\mathtt 0,\mathtt A\mathtt 1,\mathtt B\mathtt 1,\mathtt A\texttt!,\mathtt B\texttt!,\mathtt A\texttt?,\mathtt B\texttt?,\texttt-,\texttt+,\texttt@\};\\
z^{\text{token is }\texttt:}_i&:=z^{\text{is }\texttt\#}_t.}
Note that exactly one of them is $1$. It remains to execute the instruction $\iota_t$ and record its outcome. 
If $z^{\text{token is }\sigma}_i=1$ for some $\sigma\in\{\mathtt A\texttt!,\mathtt B\texttt!,\mathtt A\texttt?,\mathtt B\texttt?\}$, then we need to check whether the go-to condition is satisfied or not:
\AL{
z^{\text{sat }\mathtt A\texttt!}_i&:=\OP{ReLU}(z^{\text{token is }\mathtt A\texttt!}_i-z^{\mathtt A\text{ val}}_i),\\
z^{\text{sat }\mathtt B\texttt!}_i&:=\OP{ReLU}(z^{\text{token is }\mathtt B\texttt!}_i-z^{\mathtt B\text{ val}}_i),\\
z^{\text{sat }\mathtt A\texttt?}_i&:=\OP{ReLU}(z^{\text{token is }\mathtt A\texttt?}_i+z^{\mathtt A\text{ val}}_i-1),\\
z^{\text{sat }\mathtt B\texttt?}_i&:=\OP{ReLU}(z^{\text{token is }\mathtt B\texttt?}_i+z^{\mathtt B\text{ val}}_i-1).
}
They enable us to choose $\texttt=$ (satisfied) or $\texttt/$ (unsatisfied):
\AL{
z^{\text{token is }\texttt=}_i&:=z^{\text{sat }\mathtt A\texttt!}_i+z^{\text{sat }\mathtt B\texttt!}_i+z^{\text{sat }\mathtt A\texttt?}_i+z^{\text{sat }\mathtt B\texttt?}_i,\\
z^{\text{token is }\texttt/}_i&:=z^{\text{token is }\mathtt A\texttt!}_i+z^{\text{token is }\mathtt B\texttt!}_i+z^{\text{token is }\mathtt A\texttt?}_i+z^{\text{token is }\mathtt B\texttt?}_i-z^{\text{token is }\texttt=}_i.
}

\ASSEC{Extracting final output}{app:pf-readout}
After execution, the content left on the tape $\mathtt A$ is the Shannon-encoded output $S(\varphi(\BM x))$. We need to extract the decoded output $\varphi(\BM x)$ from the previous CoT steps. 

First, we need to check whether we have arrived at the final output stage:
\AL{z^{\text{is read key}}_i:=z^{\text{is }\texttt:}_i+z^{\text{is }\mathtt0}_i+z^{\text{is }\mathtt1}_i.}
Suppose that we are now trying to find the $k$-th token of the output $\varphi(\BM x)$. Recall that the $k$-th token ($k\ge0$) of the output $\varphi(\BM x)$ corresponds to the $(2k)$-th and $(2k+1)$-th tokens of $S(\varphi(\BM x))$. To help compute the indices $2k$ and $2k+1$, we compute the following shifts:
\AL{z^{\text{read cur0 shift}}_i&:=2(z^{\text{is }\mathtt0}_i+z^{\text{is }\mathtt1}_i),\\
z^{\text{read cur1 shift}}_i&:=z^{\text{is }\texttt:}_i+2(z^{\text{is }\mathtt0}_i+z^{\text{is }\mathtt1}_i).}
Suppose that token $\texttt:$ is the $t$-th token in the generated CoT. We use causal attention with query $1$, key $z^{\text{is read key}}_j$, and value $(z^{\text{read cur0 shift}}_j,z^{\text{read cur1 shift}}_j,z^{\text{is }\texttt:}_j)\Tp$ to compute the discounted $2k$ and $2k+1$ for $i=t+k$:
\AL{
z^{\text{read cur0 disc}}_i&:=\frac{1}{\sum_{j=0}^iz^{\text{is read key}}_j}\sum_{j=0}^iz^{\text{read cur0 shift}}_j=\frac{2k}{i-t+1},\\
z^{\text{read cur1 disc}}_i&:=\frac{1}{\sum_{j=0}^iz^{\text{is read key}}_j}\sum_{j=0}^iz^{\text{read cur1 shift}}_j=\frac{2k+1}{i-t+1},\\
z^{\text{read one disc}}_i&:=\frac{1}{\sum_{j=0}^iz^{\text{is read key}}_j}\sum_{j=0}^iz^{\text{is }\texttt:}_j=\frac{1}{i-t+1}.
}
We use layer normalization to compute representations of $2k$ and $2k+1$:
\AL{
(z^{\text{read cur0 norm}}_i,z^{\text{read cur0 one norm}}_i):={}&\OP{LN}(z^{\text{read cur0 disc}}_i,z^{\text{read one disc}}_i)\nonumber\\
={}&\bigg(\frac{2k}{\sqrt{(2k)^2+1}},\frac{1}{\sqrt{(2k)^2+1}}\bigg),\\
(z^{\text{read cur1 norm}}_i,z^{\text{read cur1 one norm}}_i):={}&\OP{LN}(z^{\text{read cur1 disc}}_i,z^{\text{read one disc}}_i)\nonumber\\
={}&\bigg(\frac{2k+1}{\sqrt{(2k+1)^2+1}},\frac{1}{\sqrt{(2k+1)^2+1}}\bigg).
}
Next, similarly with Appendix~\ref{app:pf-retr-tape}, we retrieve the $(2k)$-th and $(2k+1)$-th tokens of $S(\varphi(\BM x))$ as follows. We use causal attention with query $(1,z^{\text{read cur0 norm}}_i,z^{\text{read cur0 one norm}}_i,p_i)\Tp$, key $(z^{\text{is }\mathtt A\text{ write}}_j,z^{\mathtt A\text{ cur norm}}_j,z^{\mathtt A\text{ cur one norm}}_j,-z^{\text{pos, 1}}_j)\Tp$, and value $(z^{\mathtt A\text{ write}}_j,z^{\mathtt A\text{ cur norm}}_j,z^{\text{is }\mathtt A\text{ write}}_j)\Tp$ to compute the retrieved tape cell status $(z^{\text{read retr0}}_i,z^{\text{read retr0 cur norm}}_i,z^{\text{read retr0 is write}}_i)\Tp$. We use causal attention with query $(1,z^{\text{read cur1 norm}}_i,z^{\text{read cur1 one norm}}_i,p_i)\Tp$, key $(z^{\text{is }\mathtt A\text{ write}}_j,z^{\mathtt A\text{ cur norm}}_j,z^{\mathtt A\text{ cur one norm}}_j,-z^{\text{pos, 1}}_j)\Tp$, and value $(z^{\mathtt A\text{ write}}_j,z^{\mathtt A\text{ cur norm}}_j,z^{\text{is }\mathtt A\text{ write}}_j)\Tp$ to compute the retrieved tape cell status $(z^{\text{read retr1}}_i,z^{\text{read retr1 cur norm}}_i,z^{\text{read retr1 is write}}_i)\Tp$. We use $\OP{LN}$ and $\OP{ReLU}$ to compute an indicator of whether the retrieved tape cells have not been written:
\AL{
z^{\text{read retr0 not found}}_i:={}&\OP{ReLU}(\OP{LN}(z^{\text{read cur0 norm}}_i-z^{\text{read retr0 cur norm}}_i))+\OP{ReLU}(\OP{LN}(z^{\text{read retr0 cur norm}}_i-z^{\text{read cur0 norm}}_i))\nonumber\\
={}&1_{[z^{\text{read cur0 norm}}_i\ne z^{\text{read retr0 cur norm}}_i]},\\
z^{\text{read retr1 not found}}_i:={}&\OP{ReLU}(\OP{LN}(z^{\text{read cur1 norm}}_i-z^{\text{read retr1 cur norm}}_i))+\OP{ReLU}(\OP{LN}(z^{\text{read retr1 cur norm}}_i-z^{\text{read cur1 norm}}_i))\nonumber\\
={}&1_{[z^{\text{read cur1 norm}}_i\ne z^{\text{read retr1 cur norm}}_i]}.
}
Finally, we can obtain the values of cells $2k$ and $2k+1$:
\AL{
z^{\text{read retr0 val}}_i&:=\OP{ReLU}(z^{\text{read retr0}}_i-z^{\text{read retr0 not found}}_i+z^{\text{read retr0 is write}}_i-1),\\
z^{\text{read retr1 val}}_i&:=\OP{ReLU}(z^{\text{read retr1}}_i-z^{\text{read retr1 not found}}_i+z^{\text{read retr1 is write}}_i-1).
}
Using the quantities above, we can decide the next token to generate:
\AL{
z^{\text{token is }\mathtt0}_i&:=2\OP{ReLU}(z^{\text{is read key}}_i+z^{\text{read retr0 val}}_i-z^{\text{read retr1 val}}_i-1),\\
z^{\text{token is }\mathtt1}_i&:=2\OP{ReLU}(z^{\text{is read key}}_i+z^{\text{read retr0 val}}_i+z^{\text{read retr1 val}}_i-2),\\
z^{\text{token is }\texttt\textdollar}_i&:=2\OP{ReLU}(z^{\text{is read key}}_i-z^{\text{read retr0 val}}_i).
}
Here, we use coefficient $2$ to distinguish the output extraction stage from the execution stage. 

\ASSEC{Generating next token}{app:pf-next-token}
The next token to be generated by the Transformer $\varGamma$ is
\AL{\argmax_{\sigma\in\{\mathtt A\mathtt L,\mathtt B\mathtt L,\mathtt A\mathtt R,\mathtt B\mathtt R,\mathtt A\mathtt 0,\mathtt B\mathtt 0,\mathtt A\mathtt 1,\mathtt B\mathtt 1,\texttt/,\texttt=,\texttt-,\texttt+,\texttt@,\texttt:,\mathtt0,\mathtt1,\texttt\textdollar\}}z^{\text{token is }\sigma}_{|\BM v|-1}.}

The generation procedure stops upon the generation of the ending token $\texttt\textdollar$.\footnote{Nevertheless, the token $\texttt\textdollar$ in the prompt does not stop the generation procedure.}

\ASSEC{Composing the Transformer}{app:pf-impl}
In this subsection, we describe how to compose the aforementioned operations into a Transformer.

\textbf{Embedding layer.} Recall that the construction uses 4 positional encodings $p_i,z_i^\text{pos, 1},z_i^\text{pos, 2},z_i^\text{pos, 3}$. Thus, our token embedding and positional encoding use the first $|\varSigma|+1$ dimensions. 
For the token embedding $\operatorname{emb}$, the first $|\varSigma|$ dimensions are the ont-hot representation of the token, and the next dimension is zero. For example, if $v_i$ is the first token in $\varSigma$, then
\AL{\operatorname{emb}(v_i)=(\underbrace{1,0,\dots,0,}_{\text{first }|\varSigma|\text{ dims: one-hot}}\quad\underbrace{0,}_{\text{next dim: zero}}\quad\underbrace{0,0,\ldots,0}_\text{slots for intermediate results})^{\mathsf T}.}
For the positional encoding $\operatorname{pos}$,
\AL{\operatorname{pos}(i)=(\underbrace{0,0,\dots,0,}_{\text{first }|\varSigma|\text{ dims: zeros}}\quad\underbrace{p_i,}_{\text{next dim: pos enc}}\underbrace{0,0,\ldots,0}_\text{slots for intermediate results})^{\mathsf T}.}
Therefore, the embedding layer for the example above is
\AL{\operatorname{emb}(v_i)+\operatorname{pos}(i)=(\underbrace{1,0,\dots,0,}_{\text{first }|\varSigma|\text{ dims: one-hot}}\quad\underbrace{p_i,}_{\text{next dim: pos enc}}\quad\underbrace{0,0,\ldots,0}_\text{slots for intermediate results})^{\mathsf T}.}
Here, $\operatorname{emb}(v_i)+\operatorname{pos}(i)$ is indeed the standard embedding layer in LLMs. 

\textbf{Intermediate results.} Recall that each Transformer layer has a residual connection after its output. Thus, we can compute each intermediate result via a Transformer layer and add it to the hidden embedding via the residual connection. For the example above, we can construct a Transformer layer that computes $z_i^\text{pos, 1}$, and then the hidden embedding after the residual connection of this Transformer layer is
\AL{(\underbrace{1,0,\dots,0,}_{\text{first }|\varSigma|\text{ dims: one-hot}}p_i,z_i^\text{pos, 1},\underbrace{0,\ldots,0}_\text{slots for other intermediate results})^{\mathsf T}.}

\ASSEC{Size of the constructed Transformer}{app:pf-size}
In this subsection, we show that the constructed Transformer $\varGamma$ has a constant size in terms of the number of its operations and the number of bits to represent its parameters.

\textbf{Number of operations.} From the construction above, we can see that the constructed Transformer $\varGamma$ does not depend on $\varphi$ or $\BM x$. (To compute different functions $\varphi$, we change only the prompt but do not need to change the Transformer.) It is clear to see that our constructed Transformer has a constant number of operations. 

\textbf{Number of parameter bits.}  From the construction above, we can see that the constructed Transformer uses only $0, \frac12, 1, 2, 3$ as its parameters. These numbers can be exactly expressed with a constant number of bits. 

\ASEC{Experiments}{app:exp}

We demonstrate our construction through a Python implementation. Our code and demo results are publicly available at \url{https://github.com/q-rz/ICLR25-prompting-theory}. 

\ASEC{Concluding remarks}{app:concl}

\paragraph{Connection with universal TMs.} \citet{hennie1966two} have shown that there exists a universal TM (UTM) that can simulate any Turing machine $M$ in $\RO(t(n)\log t(n))$ steps if $M$ halts in $t(n)\ge n$ steps. Then by \citet{perez2019turing}, there exists a Transformer that simulates this UTM. However, due to the complication of the UTM, it would be quite involved to explicitly construct this Transformer. Nevertheless, this UTM has inspired us to propose $2$-PTMs, which enables us to explicitly construct a simpler Transformer. Furthermore, our $2$-PTMs establish the first theoretical framework to formalize \emph{prompting}. Using our $2$-PTM-based framework, we believe that people will be able to generalize more results from the classic one-model-one-task paradigm to the LLM prompting paradigm. 

\textbf{Discussion on CoT complexity.} A limitation of this work is that our construction uses long CoTs for hard problems with high time complexity. However, while it might be possible to slightly improve the CoT complexity, recent theoretical evidence has suggested that long CoTs are very likely to be necessary for these hard problems. For example, \citet{merrill2024expressive} have shown that any Transformer with $\RO(\log n)$ CoT steps can solve only $\SF{L}$ problems. Assuming $\SF{L}\ne\SF{NL}$, it implies that any Transformer with $\RO(\log n)$ CoT steps cannot even solve any $\SF{NL}$-complete problems such as \emph{directed graph connectivity}. An interesting future work is to show a tight lower bound of the CoT complexity.

\textbf{Discussion on hardmax.} Following prior works (e.g., \citealp{perez2019turing,hahn2020theoretical}), this work uses $\OP{hardmax}$ in attention to simplify construction. However, real-world Transformers use $\OP{softmax}$ in attention. Using $\OP{hardmax}$ instead of $\OP{softmax}$ is a limitation of this area. We hope to address this limitation in future work. 

\textbf{Discussion on context windows.} The Transformer constructed in this work uses an unbounded context window. However, practical implementations of Transformers typically use a bounded context window due to memory considerations. This is indeed a limitation of current studies on the Turing completeness of Transformers. To address this limitation, we hope to study bounded context windows in future work. 

\textbf{Discussion on learnability.} Our current work focuses on expressive power rather than learnability. While we have shown the existence of a Transformer on which prompting is Turing-complete, it does not necessarily imply that a Transformer effectively learns to simulate any $2$-PTMs through CoT steps. Investigating the learnability of such a Transformer is an intriguing direction for future research. 


\end{document}